\theoremstyle{plain}
\newtheorem{theorem}{Theorem}
\newtheorem{proposition}[theorem]{Proposition}
\newtheorem{lemma}[theorem]{Lemma}
\newtheorem{corollary}[theorem]{Corollary}
\newtheorem{definition}{Definition}
\newtheorem*{note*}{Notation}
\newtheorem{remark}{Remark}
\newcommand{\Lsig}{{\normalfont \tilde{L}}}
\newcommand{\asharp}{{\normalfont a^{\sharp}}}
\newcommand{\atsharp}{{\normalfont a_t^{\sharp}}}
\newcommand{\aisharp}{{\normalfont a_i^{\sharp}}}
\patchcmd{\algorithmic}{\addtolength{\ALC@tlm}{\leftmargin} }{\addtolength{\ALC@tlm}{\leftmargin}}{}{}
\newcommand{\nonl}{\renewcommand{\nl}{\let\nl}}
\newcommand\numberthis{\addtocounter{equation}{1}\tag{\theequation}}
\crefname{algocf}{Algorithm}{Algorithms}
\Crefname{algocfproc}{Algorithm}{Algorithms}
\Crefname{definition}{Definition}{Definitions}
\Crefname{figure}{Figure}{Figures}
\let\cref@old@stepcounter\stepcounter
\def\stepcounter#1{%
  \cref@old@stepcounter{#1}%
  \cref@constructprefix{#1}{\cref@result}%
  \@ifundefined{cref@#1@alias}%
    {\def\@tempa{#1}}%
    {\def\@tempa{\csname cref@#1@alias\endcsname}}%
  \protected@edef\cref@currentlabel{%
    [\@tempa][\arabic{#1}][\cref@result]%
    \csname p@#1\endcsname\csname the#1\endcsname}}
\newcommand{\mytag}[2]{%
  \text{#1}%
  \@bsphack
  \begingroup
    \@onelevel@sanitize\@currentlabelname
    \edef\@currentlabelname{%
      \expandafter\strip@period\@currentlabelname\relax.\relax\@@@%
    }%
    \protected@write\@auxout{}{%
      \string\newlabel{#2}{%
        {#1}%
        {\thepage}%
        {\@currentlabelname}%
        {\@currentHref}{}%
      }%
    }%
  \endgroup
  \@esphack
}
\newcommand{\ssti}{{\textnormal{\textup{SST$\cap$STI}}}\xspace}
\newcommand{\metaswift}{{\small\textsf{\textup{METASWIFT}}}\xspace}
\newcommand{\swift}{{\small\textsf{\textup{SWIFT}}}\xspace}
\newcommand{\base}{{\small\textsf{\textup{Base-Alg}}}}
\DeclareMathOperator{\Amaster}{\mc{A}_{\text{master}}}
\DeclareMathOperator{\tstart}{\it{t}_{{\normalfont \text{start}}}}
\newcommand{\DR}{{\normalfont \text{DR}}}
\newcommand{\Subproper}{{\normalfont \textsc{SubProper}(t_{\ell},t_{\ell+1})}}
\newcommand{\remove}[1]{}
\title{When Can We Track Significant Preference Shifts in Dueling Bandits?}
\author{
	Joe Suk\\
	Columbia University\\
	\texttt{joe.suk@columbia.edu}
	\And
	Arpit Agarwal\\
	Columbia University\\
	\texttt{aa4931@columbia.edu}
}
\begin{document}
\maketitle

\begin{abstract}
  The $K$-armed dueling bandits problem, where the feedback is in the form of noisy pairwise preferences,
has been widely studied due its applications in
information retrieval, recommendation systems, etc.
Motivated by concerns that user preferences/tastes can evolve over time, we consider the problem of \emph{dueling bandits
with distribution shifts}.
Specifically, we study the recent notion
of \emph{significant shifts} \citep{Suk22},
and ask whether one can design an \emph{adaptive} algorithm
for the dueling problem
with $O(\sqrt{K\tilde{L}T})$ dynamic regret,
where $\tilde{L}$ is the (unknown) number of significant
shifts in preferences.
We show that the answer to this question depends
on the properties of underlying preference distributions.
Firstly,  we give an impossibility
result that rules out any algorithm with
$O(\sqrt{K\tilde{L}T})$ dynamic regret
under the well-studied Condorcet and SST classes of preference distributions.
Secondly, we show that $\ssti$ is the largest
amongst popular classes of preference distributions
where it is possible to design such an algorithm.
Overall, our results provides an almost complete resolution of the above question for the hierarchy of  distribution classes.
\end{abstract}

\section{Introduction}
\label{intro}


The $K$-armed dueling bandits problem has been well-studied in
the multi-armed bandits literature \citep{YueJo11,YueBK+12,Urvoy+13,Ailon+14,Zoghi+14,Zoghi+15,Zoghi+15a,Dudik+15,Jamieson+15,Komiyama+15a,Komiyama+16,Ramamohan+16,ChenFr17, SahaGi22, AgarwalGN22}.
In this problem, on each trial $t \in [T]$, the learner pulls a \emph{pair} of arms
and observes \emph{relative feedback} between these arms indicating
which arm was preferred.
The feedback is typically stochastic, drawn according to a pairwise preference matrix
$\bld{P} \in [0,1]^{K \times K}$, and the
regret measures the `sub-optimality' of arms with respect to a `best' arm.

This problem has many applications, e.g.
information retrieval, recommendation systems, etc,
where relative feedback between arms
is easy to elicit, while real-valued feedback  is
difficult to obtain or interpret.
For example, a central task for information retrieval algorithms is to
output a ranked list of documents in response
to a query. The framework of online learning has been very useful for automatic parameter tuning, i.e.\ finding the best parameter(s), for such retrieval algorithms based on user feedback \citep{Liu09}.
However, it is often difficult to get numerical feedback
for an individual list of documents. 
Instead, one can (implicitly) compare two lists of documents by interleaving
them and observing the relative number of clicks \citep{RadlinskiKJ02}.
The availability of these pairwise comparisons
allows one to tune the parameters
of retrieval algorithms in real-time using the framework of dueling bandits.


However, in many such applications that rely
on user generated preference feedback,
there are practical concerns that the tastes/beliefs of users can change over time, resulting in
a dynamically changing preference distribution. Motivated by these concerns,  we consider the problem of \emph{switching dueling bandits} (or non-stationary dueling bandits), where
the pairwise preference matrix $\bld{P}_t$
changes an unknown number of times over $T$ rounds. 
The performance of
the learner is evaluated using \emph{dynamic regret} where sub-optimality of
arms is calculated with respect to the current `best' arm.

%
\citet{SahaGu22a} first studied this problem and provided an algorithm that achieves a nearly optimal (up to $\log$ terms) \emph{dynamic regret} of $\tilde{O}(\sqrt{KLT})$ where $L$ is the total number of
\emph{shifts} in the preference matrix, i.e., the number of times $\bld{P}_t$ differs from $\bld{P}_{t+1}$.
However, this result requires algorithm knowledge of $L$.
Alternatively, the algorithm of \cite{SahaGu22a} can be tuned to achieve a dynamic regret rate (also nearly optimal) $\tilde{O}(V_T^{1/3}K^{1/3}T^{2/3})$ in terms of the total-variation of change in preferences $V_T$ over $T$ total rounds. This is similarly limited by requiring knowledge of $V_T$. 

On the other hand, recent works on the switching MAB problem
show it is not only possible to design
\emph{adaptive} algorithms with $\tilde{O}(\sqrt{KLT})$ dynamic regret
without knowledge of the underlying environment \citep{AuerGO19},
but also possible to achieve a much better bound of $\tilde{O}(\sqrt{K\tilde{L}T})$
where $\tilde{L} \ll L$ is the number of \emph{significant shifts} \citep{Suk22}.
Specifically, a shift is significant when there is no `safe' arm left to play, i.e.,
every arm has, on some interval $[s_1, s_2]$,
regret order $\Omega(\sqrt{s_2- s_1})$.
Such a weaker measure of non-stationarity is appealing
as it captures the changes in best-arm which are most severe, and allows for more optimistic regret rates over the previously known $\sqrt{KLT} \land (V_T K)^{1/3} T^{2/3}$. 


Very recently,  \cite{buening22} considered an analogous notion of significant shifts for switching dueling bandits
under the
$\ssti$\footnote{$\ssti$ imposes a linear ordering over arms and two well-known conditions on the preference matrices: strong stochastic transitivity (SST) and stochastic triangle inequality (STI).} assumption. 
They gave an algorithm that achieves
a dynamic regret of $\tilde{O}(K\sqrt{\tilde{L}T})$,
where $\tilde{L}$ is the (unknown) number of \emph{significant shifts}.
However, their algorithm estimates
$\Omega(K^2)$ pairwise preferences, and hence, suffers
from a sub-optimal dependence on $K$.

\if 0
On the contrary,
there has been a lot of recent progress on adaptivity in the \emph{switching multi-armed bandits} problem.
\cite{AuerGO19} showed that one can
achieve a \emph{dynamic regret} of $O(\sqrt{KLT})$ even when the number of distribution shifts $L$ is apriori unknown.
\cite{Suk22} showed that one can
even go further than this, i.e.,
achieve a dynamic regret of $O(\sqrt{K\tilde{L}T})$ where
$\tilde{L}$ is the (unknown) number of \emph{significant shifts},
i.e.\ shifts that necessarily force an algorithm to restart.
Note that a shift may
or may not be significant, hence, $\tilde{L} \leq L$.

\fi

In this paper  we consider the
goal of designing \emph{optimal} algorithms for switching dueling bandits
whose regret depends on the number of \emph{significant} shifts $\tilde{L}$.
We ask the following question:

\bld{Question.}\emph{ Is it possible to achieve a dynamic
regret of $O(\sqrt{K\tilde{L}T})$  without knowledge of $\tilde{L}$?}

We show that the answer to this question depends on
conditions on the preference matrices.
Specifically, we consider several well-studied conditions from the dueling bandits literature,
and give an almost complete resolution of the achievability of
$O(\sqrt{K\tilde{L}T})$ dynamic regret under these conditions.




\if 0
{\color{red} Should we give the model in more detail here?}

\subsection{Model and Setting}
At time $t \in [T]$, let $\bld{P}_t \in [0,1]^{K \times K}$ denote the preference matrix where the $(i,j)$-th entry $P_t(i,j)$ is the
probability that arm $i$ beats $j$.
The Condorcet winner condition on a preference matrix $\bld{P}$ requires that there is a `best' arm $a^* \in [K]$ such that $P(a^*, i) \geq \half$ for all $i \in [K]$.
We refer to $a^*$ as the Condorcet winner under $\bld{P}$.
We say that the given instance of the problem satisfies the Condorcet
condition if $\bld{P}_t$ satisfies this condition
at each time $t \in [T]$.
We also consider the strong stochastic transitivity (SST) condition.
A preference matrix $\bld{P}$ satisfies
SST if there exists an ordering $ \succ$ over arms
such that $P(i,k) \geq \max\{P(i,j), P(j,k)\}$
for any $i \succ j \succ k$.
Note that any preference matrix that satisfies the SST condition
also satisfies the Condorcet winner condition.
Finally, we say that a matrix $\bld{P}$ satisfies
\ssti \ if it satisfies SST and also satisfies the
stochastic triangle inequality (STI) condition.
The STI condition requires that
$P(i,k) \leq P(i,j) + P(j,k) - \half$
for any $i \succ j \succ k$.
Figure~\ref{fig:models} depicts the relation between
these well-studied conditions.

\fi



\vspace{-3mm}
\subsection{Our Contributions}
\vspace{-3mm}

\begin{figure}
    \centering
    \vspace{-3mm}
    \includegraphics[scale=0.15]{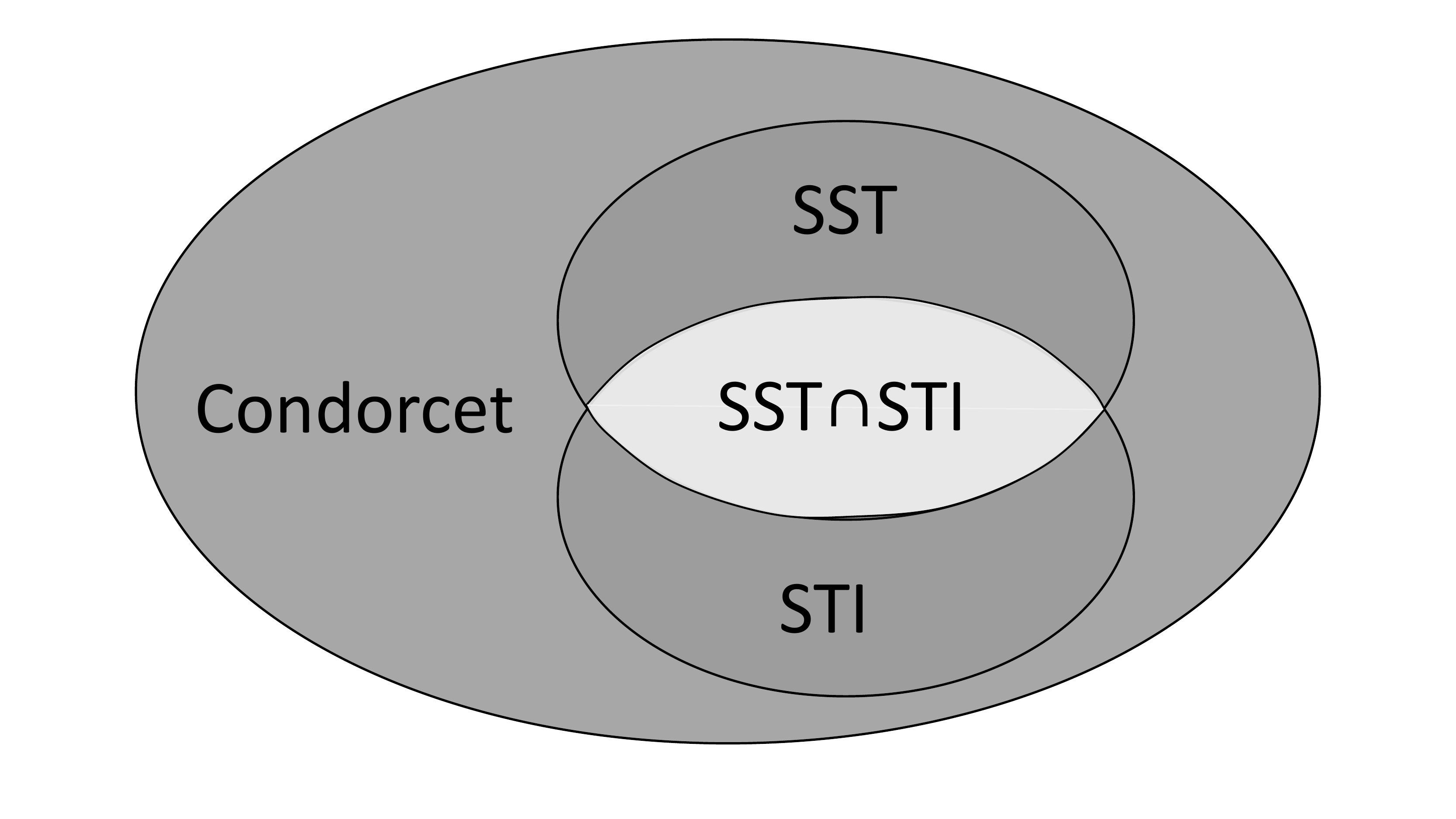}
    \caption{ The hierarchy of distribution classes.  The dark region is where  $O(\sqrt{K\tilde{L}T})$ dynamic regret is not achievable, whereas the light region indicates achievablility (e.g., by our \Cref{meta-alg}).}
    \label{fig:my_label}
    \vspace{-5mm}
\end{figure}

We first consider the classical {\em Condorcet winner} (CW) condition where, at each time $t \in [T]$, there is a `best' arm under the preference $\bld{P}_t$ that stochastically
beats every other arm.
Such a winner arm is a benchmark in defining the aforementioned dueling {\em dynamic regret}.
Our first result shows that, even under the CW condition, it is in general impossible to achieve $O(\sqrt{K\tilde{L}T})$ dynamic regret.

\begin{theorem}(Informal)
   There is a family of instances $\mathcal{F}$ under Condorcet
   where all shifts are non-significant, i.e.\ $\tilde{L} =0$,
   but  no algorithm can achieve $o(T)$ dynamic regret uniformly over $\mathcal{F}$.
\end{theorem}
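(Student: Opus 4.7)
The plan is to prove the impossibility by exhibiting a family $\mathcal{F}$ of Condorcet instances together with a Yao-style randomized choice of instance from $\mathcal{F}$ against which every algorithm incurs linear expected regret, while each instance in $\mathcal{F}$ satisfies $\tilde{L}=0$. The main idea is that under Condorcet alone the adversary is free to set pairwise preferences among non-winner arms to $1/2$, making the bulk of the feedback indistinguishable across different choices of the ``safe'' arm; the learner therefore cannot identify the safe arm without paying regret.

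Concretely, I would take $K+1$ arms labeled $\{0,1,\ldots,K\}$ and an oblivious randomized adversary that first samples a label $s$ uniformly from $\{0,\ldots,K\}$ (the safe arm) and then, for each round $t \in [T]$, independently samples a Condorcet winner $a^*_t$ uniformly from $\{0,\ldots,K\}\setminus\{s\}$. The preference matrix $\bld{P}_t$ is defined by $P_t(a^*_t,j) = 1/2 + \epsilon_1$ for $j \neq a^*_t, j \neq s$, $P_t(a^*_t, s) = 1/2 + \epsilon_0$ with $\epsilon_0 = 1/\sqrt{T}$ and $\epsilon_1$ a small absolute constant, and $P_t(i,j) = 1/2$ for every other pair. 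Each $\bld{P}_t$ satisfies Condorcet with unique winner $a^*_t$, and arm $s$ has per-round gap exactly $\epsilon_0$, so its cumulative regret on any subinterval $[s_1,s_2]$ is at most $\epsilon_0(s_2-s_1) \leq \sqrt{s_2-s_1}$. This certifies arm $s$ as a globally safe arm and hence gives $\tilde{L}=0$ under the natural dueling adaptation of \cite{Suk22}'s significant-shift definition.

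The regret lower bound would then follow from a symmetry/KL-divergence argument. For any pair $(i,j)$ chosen by the algorithm, averaging the outcome over the random winner $a^*_t$ gives the following: if neither $i$ nor $j$ equals $s$, the marginal is exactly $\mathrm{Bernoulli}(1/2)$ by cancellation; if one equals $s$, the marginal is $\mathrm{Bernoulli}(1/2 \pm \epsilon_0/K)$. Hence each round contributes only $O((\epsilon_0/K)^2) = O(1/(K^2 T))$ to the KL-divergence between the distributions of observations under two different values of $s$, and summing over $T$ rounds the total KL is $O(1/K^2) \ll \log(K+1)$. A Fano- or Pinsker-type argument then shows the algorithm's marginal over arm labels remains nearly uniform, so the expected gap of each played arm is $\Omega(\epsilon_1)$, yielding total expected regret $\Omega(\epsilon_1 T) = \Omega(T)$ and hence $\Omega(T)$ regret on some instance of $\mathcal{F}$.

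The main obstacle I anticipate is making the KL argument rigorous despite the algorithm's actions being history-dependent and non-i.i.d.; this requires a careful chain-rule KL computation, or a coupling between two labelings of $s$, to verify that very little information is revealed round-by-round. A secondary technical issue is matching the safe-arm condition to the paper's precise dueling analog of \cite{Suk22}'s significant-shift definition, which may involve a slightly different threshold (e.g., $\sqrt{K(s_2-s_1)}$ rather than $\sqrt{s_2-s_1}$), potentially calling for a small rescaling of $\epsilon_0$.
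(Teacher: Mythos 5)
Your construction is sound and would prove the statement, but it runs the indistinguishability argument through a different engine than the paper. The paper (Theorem~\ref{thm:lower-bound}, proof in Appendix~\ref{app:lower-bound-proof}) uses only $K=3$ arms and two fixed matrices $\bld{P}^+,\bld{P}^-$ (with entries $0$ and $1$ for the extreme pair and $\tfrac12\pm\epsilon$ elsewhere), mixed uniformly at random at \emph{every} round; by the exact symmetry of the mixture, every observation is \emph{exactly} $\Ber(1/2)$ no matter which pair is played, so the KL divergence between the environment and its arm-permuted copy is identically zero, and the lower bound follows from equating the pull counts of arm $2$ across the two labelings and relating pull counts of the extreme arms to regret. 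Your version instead hides the safe arm $s$ among $K+1$ candidates, accepts a nonzero per-round KL of order $\epsilon_0^2/K^2$ when $s$ is touched, and closes with a Fano/Pinsker argument over $K+1$ hypotheses. Both exploit the identical structural loophole --- a safe arm with gap $O(1/\sqrt{T})$ (hence $\tilde{L}=0$) sitting next to arms with constant gaps, which is exactly a violation of STI while SST still holds --- so your family would also certify the paper's SST corollary. What the paper's route buys is simplicity and sharpness: zero KL means no quantitative hypothesis-testing step, no constants to track for small $K$, and the history-dependence issue you flag disappears (the chain-rule KL computation is trivially zero term by term). What your route buys is generality in $K$ and a more transparent ``cannot locate the safe needle'' narrative. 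The two technical caveats you anticipate (the chain-rule KL over adaptive plays, and matching the $\sqrt{K(s_2-s_1)}$ threshold in Definition~\ref{definition:sig-shift}) are both routine and are resolved exactly as you suggest; with $\epsilon_0=1/\sqrt{T}$ the safe arm's regret on any $[s_1,s_2]$ is at most $\sqrt{s_2-s_1}\le\sqrt{K(s_2-s_1)}$, so no rescaling is even needed.
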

\vspace{-2mm}


Note that in the case when $\tilde{L} = 0$,
one would ideally like to achieve a dynamic regret
of $O(\sqrt{KT})$. The above theorem shows that, under the Condorcet condition when $\tilde{L} = 0$,
not only is it impossible to achieve
$O(\sqrt{KT})$ regret, it is even impossible
to achieve $O(T^{\alpha})$ regret for any $\alpha< 1$.
Hence, this rules out the possibility of
an algorithm
whose regret under this condition is sublinear in $\tilde{L}$ and $T$.

The proof of the above theorem relies on a careful construction
where, at each time $t$,
the preference $\bld{P}_t$ is chosen uniformly at
random from two different matrices $\bld{P}^+$ and $\bld{P}^-$. These matrices have different `best' arms but there is a unique \emph{safe arm} in both.
However, it is impossible to identify this safe arm as all observed pairwise preferences are $\Ber(\half)$ over the randomness of the environment.
Moreover, the theorem gives two different constructions (one ruling out SST and one STI) which together rule out all preference classes outside of $\ssti$.
Our second result shows that the
desired regret $\sqrt{K\Lsig T}$ is in fact achievable (adaptively) under $\ssti$.
\begin{theorem}(Informal)
   There is an algorithm
   that achieves a dynamic regret of $\tilde{O}(\sqrt{K\tilde{L}T})$
   under $\ssti$ without requiring knowledge of $\tilde{L}$.
\end{theorem}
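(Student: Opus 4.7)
The plan is to adapt the randomized-replay meta-algorithm of \citet{Suk22} from stochastic MAB to the dueling setting, replacing unary mean estimates with empirical pairwise preferences and leveraging $\ssti$ to keep the per-round comparison cost at $O(K)$ rather than $O(K^2)$.

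As the base subroutine, on a putative stationary phase maintain an active set $\mathcal{A} \subseteq [K]$ of candidate arms. Each round, duel a pair drawn from $\mathcal{A}$ (uniformly, say) and evict arm $a$ as soon as some witness $b \in \mathcal{A}$ satisfies $\hat{P}_I(b,a) - 1/2 \geq c\sqrt{K/|I|}$ on some sub-interval $I$ of the current phase. SST makes this test sound---the Condorcet winner is never eliminated with high probability, so a safe arm persists in $\mathcal{A}$---while STI ensures that a pairwise margin against one witness transports to a proportional margin against every arm weaker than the witness. This allows the confidence-radius prefactor to scale linearly rather than quadratically in $K$, yielding $\tilde{O}(\sqrt{KT})$ regret on any stationary phase of length $T$.

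On top of the base subroutine, layer the replay scheduler of \citet{Suk22}: at every round, and for each dyadic window length $m$, independently schedule with probability $\propto 1/\sqrt{m}$ a replay copy of the base subroutine on a lookback window of length $m$. Each replay maintains its own active set and, whenever its internal test certifies that the arm $a^\sharp$ currently preferred by the master has been beaten on its window, evicts $a^\sharp$ from the master's active set. A significant shift is declared---and all subroutines restarted from scratch---precisely when the master's active set becomes empty, i.e., every arm has been certified suboptimal on some past window within the current epoch.

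For the regret analysis, partition $[T]$ into the $\Lsig + 1$ epochs delimited by the declared shifts. On each epoch $[t_\ell, t_{\ell+1})$, non-significance guarantees the existence of a safe arm that is $O(\sqrt{K/\tau})$-good on every sub-interval of length $\tau$; a martingale argument as in \citet{Suk22} shows this safe arm is retained by the master with high probability, while the randomized replay schedule contributes only $\tilde{O}(\sqrt{K(t_{\ell+1}-t_\ell)})$ overhead per epoch. Summing over the $\Lsig+1$ epochs and applying Cauchy--Schwarz yields the claimed $\tilde{O}(\sqrt{K\Lsig T})$ bound. The main obstacle is preserving the linear-$K$ dependence across every stage of the argument: because a single duel reveals information only about one pair, the confidence intervals used in the base subroutine, in the replays, and in the eviction rule are all coupled across arms, and one must appeal to SST and STI at each confidence test to avoid reintroducing a $\sqrt{K^2}$ factor of the kind incurred by \cite{buening22}.
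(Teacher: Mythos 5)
There is a genuine gap, and it sits exactly at the point you flag as ``the main obstacle'' without resolving it. Your base subroutine duels a pair drawn uniformly from the active set $\mathcal{A}\times\mathcal{A}$ and evicts $a$ when some witness $b$ achieves $\hat{P}_I(b,a)-1/2 \geq c\sqrt{K/|I|}$. Under uniform pair sampling, a specific pair $(b,a)$ is compared only about $|I|/|\mathcal{A}|^2$ times on an interval $I$, so the importance-weighted estimate of $\sum_{t\in I}\delta_t(b,a)$ has variance of order $K^2$ per round and concentrates only at radius $K\sqrt{|I|}$, not $\sqrt{K|I|}$. This is precisely the uniform-exploration approach of \cite{buening22} that the paper identifies as the reason for the suboptimal $K\sqrt{\Lsig T}$ rate; asserting that ``one must appeal to SST and STI at each confidence test'' does not supply a mechanism, and no amount of transporting margins via STI repairs a confidence interval that is too wide by a factor of $\sqrt{K}$. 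The paper's actual fix is structural: one of the two arms played every round is a designated \emph{candidate} arm $\hat{a}_t$, so the pair $(\hat{a}_t,a)$ is sampled with probability $1/|\mathcal{A}_t|$ rather than $1/|\mathcal{A}_t|^2$, and the estimator $\hat{\delta}_t(\hat{a}_t,a)$ in \eqref{eq:gap-estimate} has variance $K$, giving the $\sqrt{K(s_2-s_1)}$ radius of \Cref{prop:concentration}.

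A second, related gap is that your witness-based eviction implicitly relies on a (locally) stable ordering: a ``witness'' only certifies suboptimality of arms below it if the ordering persists. The paper stresses that under $\ssti$ the total ordering may change at every round without any shift being significant, which is why it abandons ranking arguments entirely and instead (i) evicts $a$ based on its cumulative regret $\sum_s \hat{\delta}_s(\hat{a}_s,a)$ against the \emph{changing sequence} of candidates, and (ii) switches the candidate via the criterion \eqref{eq:switch} whenever some arm dominates that sequence in aggregate. This candidate-switching rule is the key new idea; it is what lets \Cref{lem:upper-triangle} and \Cref{lem:upper-triangle-generic} convert the dynamic regret into sums controlled by the eviction and switching criteria. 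Your meta-level structure (dyadic replays, restart when the master set empties, per-epoch safe-arm decomposition) does match the paper's, but without the candidate mechanism the per-epoch bound $\tilde{O}(\sqrt{K(t_{\ell+1}-t_\ell)})$ you assert is not established.
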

Figure~\ref{fig:my_label} gives a summary of our results.
Note that in stationary dueling bandits there is no separation in
the regret achievable under the
CW vs. $\ssti$ conditions, i.e.\
$O(\sqrt{KT})$ is the minimax optimal regret rate under
both conditions \citep{SahaGi22}.
However, our results show that in the non-stationary setting with regret in terms of significant shifts, there is a separation in adaptively achievable regret.

\paragraph{Key Challenge and Novelty in Regret Upper Bound:}
To contrast, the recent work of \cite{buening22} only attains $\tilde{O}(K\sqrt{\Lsig T})$ dynamic regret under $\ssti$
due to inefficient exploration of arm pairs.
Our more challenging goal of obtaining the optimal dependence on $K$ introduces key difficulties in algorithmic design. In fact, even in the classical stochastic dueling bandit problem with $\ssti$, most existing results that achieve
$O(\sqrt{KT})$ regret require identifying a coarse ranking over arms to avoid suboptimal exploration of low ranked arms \citep{Yue+12,YueJo11}.
However, in the non-stationary setting, ranking the arms meaningfully is difficult as the true ordering of arms may change (insignificantly) at all rounds. Our main algorithmic innovation is to bypass the task of ranking arms and instead directly focus on minimizing the cumulative regret of played arms.
This entails a new rule for selecting ``candidate''
arms based on cumulative regret
that may be of independent interest.



\vspace{-2mm}
\subsection{Related Work}

\paragraph{Dueling bandits.}
The stochastic dueling bandits problem and its variants have been studied widely (see \cite{SuiZHY18} for a
comprehensive survey).
This problem was first proposed by
\cite{YueBK+12}, who provide an algorithm achieving
instance-dependent $O(K \log T)$ regret under the $\ssti$ condition.
\cite{YueJo11} also studied this problem under the $\ssti$ condition and gave an algorithm that achieves optimal instance-dependent regret.
\cite{Urvoy+13} studied this problem
under the Condorcet winner condition and
achieved an instance-dependent $O(K^2\log T)$ regret bound, which was
further improved by \cite{Zoghi+14} and \cite{Komiyama+15a}
to  $O(K^2 + K \log T)$.
Finally, \cite{SahaGi22} showed that
it is possible to achieve an optimal instance-dependent bound of $O(K\log T)$ and instance-independent bound of $O(\sqrt{KT})$
under the Condorcet condition.
More general notions of winners
such as Borda winner \citep{Jamieson+15}, Copeland winner \citep{Zoghi+15, Komiyama+16, WuLiu16}, and von Nuemann winner  \citep{Dudik+15} have also been considered.
However, these works only consider the stationary setting
whereas we consider the non-stationary setting.

There has also been work on adversarial dueling bandits
\citep{SahaKM21, GajaneUC15}, however, these works
only consider static regret against the `best' arm in hindsight
and whereas we consider the harder dynamic regret.
Other than the two previously mentioned works \citep{SahaNDB,buening22}, the only other work on switching dueling bandits is \citet{BengsNDB}, whose procedures require knowledge of non-stationarity and only consider the weaker measure of non-stationarity $L$ counting all changes in the preferences. 


\paragraph{Non-stationary multi-armed bandits.}
Multi-armed bandits with changing rewards was first considered in the adversarial setup by \citet{Auer+02a}, where a version of EXP3 was shown to attain optimal dynamic regret $\sqrt{KLT}$ when properly tuned using the number $L$ of changes in the rewards. Later works established similar (non-adaptive) guarantees in this so-called {\em switching bandit} problem via procedures
inspired by stochastic bandit algorithms
\citep{garivier2011upper,kocsis2006}. More recent works \citep{auer2018,AuerGO19,luo+19} established the first adaptive and optimal dynamic regret guarantees, without requiring knowledge of the number of changes. 
An alternative parametrization of switching bandits, via a total-variation quantity,
was introduced in \citet{besbes+14} with minimax rates quantified therein and adaptive rates attained in \citet{luo+19}. Yet another characterization, in terms of the number of best arm switches $S$ was studied in \citet{Abasi2022A}, establishing an adaptive regret rate of $\sqrt{SKT}$. Around the same time, \citet{Suk22} introduced the aforementioned notion of {\em significant shifts} and adaptively achieved rates of the form $\sqrt{K \Lsig  T}$ in terms of $\Lsig$ significant shifts in rewards.

%
%

\section{Problem Formulation}
We consider non-stationary dueling bandits with $K$ arms and time-horizon $T$. At round $t \in [T]$, the pairwise preference matrix
is denoted by
$\bld{P}_t\in[0,1]^{K\times K}$, where the $(i,j)$-th entry $P_t(i,j)$ encodes the likelihood of observing a preference for arm $i$ in a direct comparison with arm $j$.
The preference matrix may change arbitrarily from round to round.
At round $t$, the learner selects a pair of actions $(i_t,j_t)\in [K]\times [K]$ and observes the feedback $O_t(i_t,j_t) \sim \Ber(P_t(i_t,j_t))$ where $P_t(i_t,j_t)$ is the underlying preference of arm $i_t$ over $j_t$.
We define the pairwise gaps $\delta_t(i,j) := P_t(i,j) - 1/2$.

\textbf{Conditions on Preference Matrix.}
We consider two different conditions on preference matrices: (1) the Condorcet winner (CW) condition and (2) the strong stochastic transitivity (SST) and stochastic triangle inequality (STI), formalized below.
\begin{definition}
{(CW condition)}\label{assumption:winner}
	At each round $t$, there is a {\bf Condorcet winner} arm, denoted by $a_t^*$, such that $\delta_t(a_t^*,a) \geq 0$ for all $a\in[K]\bs\{a_t^*\}$. Note that $a_t^*$ need not be unique. 
\end{definition}

\begin{definition}
{($\ssti$ condition)}
\label{assumption:sst+sti}
	At each round $t$, there exists a total ordering on arms, denoted by $\succ_t$, and $\forall i\succeq_t j\succeq_t k$:
 \begin{enumerate}[(a)]
 \item $\delta_t(i,k) \geq \max\{\delta_t(i,j),\delta_t(j,k)\}$ (SST).
 \item $\delta_t(i,k) \leq \delta_t(i,j) + \delta_t(i,k)$ (STI).
    \end{enumerate}
\end{definition}


It's easy to see that the SST condition implies the CW condition
as $\delta_t(i,j) \geq \delta_t(i,i) = 0$ for any $i \succ_t j$.
Hence, the highest ranked item under $\succ_t$
in \Cref{assumption:sst+sti}
is the CW $a^*_t$.
We emphasize here that the CW in \Cref{assumption:winner} and the total ordering on arms in \Cref{assumption:sst+sti} can change at each round, even while such unknown changes in preference may not be counted as significant (see below).

\textbf{Regret Notion.}
Our benchmark is the {\em dynamic regret} to the sequence of Condorcet winner arms:
\[
	\DR(T) := \sum_{t=1}^T \frac{\delta_t(a_t^*,i_t) + \delta_t(a_t^*,j_t)}{2}.
\]
Here, the regret of an arm $i$ is
defined in terms of the preference gap $\delta_t(a^*_t, i)$ between the winner arm $a^*_t$ and $i$,
and the regret of the pair $(i_t,j_t)$ is the average regret
of individual arms $i_t$ and $j_t$.
Note the this regret is well-defined under both Condorcet
and $\ssti$ conditions due to the existence of a unique `best' arm $a^*_t$,
and is non-negative due to the fact that $\delta_t(a^*_t, i) \geq 0$ for all $i \in [K]$.

\textbf{Measure of Non-Stationarity.}
\label{subsec:notion-nonstat}
We first recall the notion of Significant Condorcet Winner Switches from \citet{buening22}, which captures only the switches in $a_t^*$ which are severe for regret. Throughout the paper, we'll also refer to these as {\em significant shifts} for brevity.

\begin{definition}[Significant CW Switches]\label{definition:sig-shift}
	Define an arm $a$ as having {\bf significant regret} over $[s_1,s_2]$ if
	\begin{equation}\label{eq:sig-regret}
		\sum_{s=s_1}^{s_2} \delta_s(a_s^*,a) \geq \sqrt{K\cdot (s_2-s_1)}.
	\end{equation}
	We then define {\bf significant CW switches} recursively as follows: 
let $\tau_0=1$ and define the $(i+1)$-th significant CW switch $\tau_{i+1}$ as the smallest $t>\tau_i$ such that for each arm $a\in[K]$, $\exists [s_1,s_2]\subseteq [\tau_i,t]$ such that arm $a$ has significant regret over $[s_1,s_2]$. We refer to the interval of rounds $[\tau_i,\tau_{i+1})$ as a {\bf significant phase}. Let $\Lsig$ be the number of significant CW switches elapsed in $T$ rounds.
\end{definition}

\begin{note*}
	To ease notation, we'll conflate the closed, open, and half-closed intervals of real numbers $[a,b]$, $(a,b)$, and $[a,b)$, with the corresponding rounds contained therein, i.e. $[a,b]\equiv [a,b]\cap \mb{N}$.
\end{note*}

\section{Hardness of Significant Shifts in the Condorcet Winner Setting}

We first consider regret minimization in an environment with no significant shift in $T$ rounds. Such an environment admits a {\em safe arm} $\asharp$ which does not incur significant regret throughout play.
Our first result shows that, under the Condorcet condition, it is not possible to distinguish the identity of $\asharp$ from other unsafe arms, which will in turn make sublinear regret impossible. 



\begin{theorem}\label{thm:lower-bound}
	For each horizon $T$, there exists a finite family $\mc{F}$ of switching dueling bandit environments with $K=3$ that satisfies the Condorcet winner condition (\Cref{assumption:winner}) with $\Lsig=0$ significant shifts. The worst-case regret of any algorithm on an environment $\mc{E}$ in this family is lower bounded as
	\[
		\sup_{\mc{E}\in\mc{F}} \mb{E}_{\mc{E}} \left[ \DR(T) \right] \geq T/8.
	\]
\end{theorem}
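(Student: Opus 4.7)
The plan is to build a finite family $\mc{F} = \{\mc{E}_1, \mc{E}_2, \mc{E}_3\}$ of randomized switching environments on $K=3$ arms such that: (a) each $\mc{E}_k$ satisfies the Condorcet condition at every realization, (b) each $\mc{E}_k$ has $\Lsig = 0$ because arm $k$ serves as a \emph{safe arm} with deterministically zero regret at every round, and (c) the law of the learner's observations is identical across the three environments, so the identity of the safe arm cannot be inferred. In $\mc{E}_k$, the matrix $\bld{P}_t$ is drawn independently at each round, uniformly from two fixed matrices $\bld{P}^{+(k)},\bld{P}^{-(k)}$ that are entrywise $1/2$-complements, i.e.\ $P^{+(k)}(i,j) + P^{-(k)}(i,j) = 1$ for $i \ne j$. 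In both matrices arm $k$ is tied with every other arm, while the two remaining arms swap roles as the strict Condorcet winner with margin $\epsilon$. Concretely, for $k=2$ one takes $P^{+(2)}(1,2) = P^{+(2)}(2,3) = 1/2$ and $P^{+(2)}(1,3) = 1/2+\epsilon$ (so arm $1$ is CW in $\bld{P}^{+(2)}$), and $\bld{P}^{-(2)}$ flips only the $(1,3)$-entry (so arm $3$ is CW in $\bld{P}^{-(2)}$); $\mc{E}_1$ and $\mc{E}_3$ are built by symmetric relabeling.

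First I would verify the structural properties. The Condorcet condition is immediate from each matrix. For $\Lsig=0$, note that in any realization of $\mc{E}_k$ the gap $\delta_t(a_t^*, k) = 0$ at every $t$, so arm $k$ never incurs any regret on any subinterval and a significant shift can never be triggered. For indistinguishability, the $1/2$-complement property gives $\mb{E}_{\bld{P}_t}[P_t(i,j)] = 1/2$ for every pair $(i,j)$ and every $k$, and since the $\bld{P}_t$ are drawn i.i.d.\ across rounds, a straightforward induction on $t$ shows that the joint law of the transcript $(i_s, j_s, O_s)_{s\le t}$ agrees across $\mc{E}_1, \mc{E}_2, \mc{E}_3$: conditional on the play history, each $O_t$ is $\Ber(1/2)$ in every environment. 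In particular, the expected arm-pull counts $m_{k'} := \mb{E}_{\mc{E}_k}\bigl[\sum_t (\mathbf{1}[i_t=k'] + \mathbf{1}[j_t=k'])\bigr]$ do not depend on $k$ and satisfy $\sum_{k'} m_{k'} = 2T$.

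I would then turn indistinguishability into the regret lower bound. Since $(i_t, j_t)$ is independent of $\bld{P}_t$, averaging the per-round gap over the draw of $\bld{P}_t$ (with the adversarial tie-break on $a_t^*$ when not unique) gives $\bar{\delta}_{k',k} := \mb{E}_{\mc{E}_k}[\delta_t(a_t^*, k')]$ equal to $0$ when $k'=k$ and $\epsilon/2$ otherwise. Hence
\begin{equation*}
\mb{E}_{\mc{E}_k}[\DR(T)] \;=\; \tfrac{1}{2}\sum_{k'} m_{k'}\,\bar{\delta}_{k',k} \;=\; \tfrac{\epsilon}{4}\bigl(2T - m_k\bigr).
\end{equation*}
Summing over $k \in \{1,2,3\}$ and using $\sum_k m_k = 2T$ yields $\sum_k \mb{E}_{\mc{E}_k}[\DR(T)] = \epsilon T$, so $\sup_{\mc{E}\in\mc{F}} \mb{E}_{\mc{E}}[\DR(T)] \ge \epsilon T/3$. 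Taking $\epsilon = 3/8$ (admissible since $1/2\pm\epsilon \in [0,1]$) delivers the claimed $T/8$ bound.

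The hard part, I expect, is co-designing the matrices so that (b) and (c) hold together: the safe arm must incur \emph{pathwise} zero regret on every realization, while the pair of matrices within each environment must be $1/2$-complements so that observations carry no information about the environment's identity. The $1/2$-complement structure forces the Condorcet winner to swap between $\bld{P}^{+(k)}$ and $\bld{P}^{-(k)}$, which is precisely why a third always-tied arm is needed as the safe arm --- and why $K=3$ suffices. A minor care point is the non-uniqueness of $a_t^*$ under ties, which the adversary resolves to maximize the gap against non-safe arms; under this tie-break arm $k$ still attains gap $0$, so (b) is preserved.
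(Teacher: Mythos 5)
Your high-level strategy is the same as the paper's: draw $\bld{P}_t$ i.i.d.\ uniformly from a pair of entrywise $1/2$-complementary matrices so that every observation is $\Ber(1/2)$ and the transcript law is environment-independent, keep a middle ``safe'' arm to force $\Lsig=0$, and convert indistinguishability of pull counts into linear regret. Your bookkeeping (averaging over a three-element family rather than the paper's two-environment Pinsker/KL comparison) is fine and arguably cleaner.

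However, there is a genuine gap in the construction itself, and it is exactly the point you wave off as ``a minor care point.'' In your $\bld{P}^{+(k)}$ and $\bld{P}^{-(k)}$ the safe arm $k$ ties with \emph{every} other arm, so arm $k$ is itself a valid Condorcet winner at every round (it satisfies $\delta_t(k,a)\geq 0$ for all $a$). If the benchmark is taken to be $a_t^*=k$ --- an equally legitimate designation --- then $\delta_t(a_t^*,i_t)=P_t(k,i_t)-1/2=0$ for every played arm, and the dynamic regret of your environments is identically zero. Your entire lower bound therefore rests on an \emph{adversarial} choice of which Condorcet winner to measure regret against, which is not licensed by the setup: the paper explicitly remarks that the regret ``is well-defined \ldots due to the existence of a unique `best' arm $a_t^*$,'' and its own matrices are built so that the winner is strictly unique (arm $1$ beats arm $2$ by $\epsilon$ and arm $3$ with probability $1$ in $\bld{P}^+$). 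An impossibility result that evaporates under a different valid benchmark is not an impossibility result. The fix is easy but changes your parameters: break the ties so the safe arm \emph{strictly} loses to the current winner by some $\epsilon'<\sqrt{K/T}$ (keeping its regret insignificant on every interval, as the paper does), at the cost of an $O(\epsilon' T)=O(\sqrt{T})$ correction to your regret identity. Note also that the paper does not need your constant margin $\epsilon=3/8$ between the two swapping arms: it puts entries $0$ and $1$ between them, so each pull of a non-safe arm costs a constant $1/4$ in expectation even while the safe arm's per-round gap is $o(1/\sqrt{T})$. Once you make the winner unique, your argument goes through and is essentially the paper's.
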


	\begin{proof}{(sketch; details found in \Cref{app:lower-bound-proof})}
		Letting $\epsilon\ll 1/\sqrt{T}$, consider the preference matrices:
		\begin{align*}
			\bld{P}^+ := \begin{pmatrix}
				1/2 & 1/2+\epsilon & 1\\
				1/2-\epsilon & 1/2 & 1/2+\epsilon\\
				0 & 1/2-\epsilon & 1/2
			\end{pmatrix},
				\bld{P}^- := \begin{pmatrix}
				1/2 & 1/2-\epsilon & 0\\
				1/2+\epsilon & 1/2 & 1/2-\epsilon\\
				1 & 1/2+\epsilon & 1/2
			\end{pmatrix}.
		\end{align*}
		In $\bld{P}^+$, arm $1$ is the Condorcet winner and $1\succ 2 \succ 3$, whereas in $\bld{P}^-$, $3$ is the winner with $3\succ 2 \succ 1$. Let an oblivious adversary set $\bld{P}_t$ at round $t$ to one of $\bld{P}^+$ and $\bld{P}^-$, uniformly at random,
		inducing an environment where arm $2$ remains safe for $T$ rounds.
		Then, any algorithm will, over the randomness of the adversary, observe $O_t(i_t,j_t) \sim \Ber(1/2)$ {\em no matter the choice of arms $(i_t,j_t)$ played}, by the symmetry of $\bld{P}^+,\bld{P}^-$. Thus, it is impossible to distinguish arms, which implies linear regret by standard Pinsker's inequality arguments. In particular, even a strategy playing arm $2$ every round fails as arm $2$ is unsafe in another (indistinguishable) setup with arms $1$ and $2$ switched in $\bld{P}^+,\bld{P}^-$.
	\end{proof}
	\paragraph{SST and STI Both Needed To Learn Significant Shifts.} The preferences $\bld{P}^+,\bld{P}^-$ in the above proof violate STI but satisfy SST, whereas another construction using preferences $\bld{P}^+,\bld{P}^-$ which violate SST but satisfy STI also works in the proof (see \Cref{rmk:sst-not-sti} in Appendix~\ref{app:lower-bound-proof}). This shows that sublinear regret is impossible outside of the class $\ssti$ (visualized in Figure~\ref{fig:my_label}).

\begin{remark}
	Note the lower bound of \Cref{thm:lower-bound} does not violate the established upper bounds $\sqrt{LT}$ and $V_T^{1/3}T^{2/3}$ scaling with $L$ changes in the preference matrix or total variation $V_T$ \citep{SahaNDB}. Our construction in fact uses $L=\Omega(T)$ changes in the preference matrix and $V_T=\Omega(T)$ total variation.
	Furthermore, the regret upper bound $\sqrt{ST}$, in terms of $S$ changes in Condorcet winner, of \cite{buening22} is not contradicted either, for $S=\Omega(T)$. 
\end{remark}


\section{Dynamic Regret Upper Bounds under SST/STI}
\label{sec:regret-upper-bound}

Acknowledging that significant shifts are hard outside of the class $\ssti$, we now turn our attention to the achievability of $\sqrt{K\Lsig T}$ regret\footnote{The lower bound construction of \citet{SahaGu22a} in fact uses $\Omega(L)$ significant shifts so that the $\sqrt{L\cdot \Lsig}$ rate is in fact minimax optimal} in the $\ssti$ setting. Our main result is an optimal dynamic regret upper bound attained {\em without knowledge of the significant shift times or the number of significant shifts}.
Up to log terms,
this is the first dynamic regret upper bound with optimal dependence on $T$, $\Lsig$, and $K$.

\begin{theorem}\label{thm:metaswift}
	Suppose SST and STI hold (see \Cref{assumption:sst+sti}). Let $\{\tau_i\}_{i=0}^{\Lsig}$ denote the unknown significant shifts  of \Cref{definition:sig-shift}. Then, for some constant $C_0>0$, \Cref{meta-alg} has expected dynamic regret
	\[
		\mb{E}[\DR(T)] \leq C_0\log^3(T)\sum_{i=0}^{\Lsig} \sqrt{K\cdot (\tau_{i+1}-\tau_i)},
	\]
	and using Jensen's inequality, this implies a regret rate of $C_0\log^3(T)\sqrt{K\cdot (\Lsig+1)\cdot T}$.
\end{theorem}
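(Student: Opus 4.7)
The plan is to decompose the dynamic regret across the $\Lsig+1$ significant phases $[\tau_i,\tau_{i+1})$ and bound the regret within each phase by $\tilde{O}(\sqrt{K\cdot(\tau_{i+1}-\tau_i)})$; summing across phases and applying Jensen's inequality then yields the stated rate. First I would set up a standard high-probability good event on which all relevant empirical pairwise preference estimates concentrate to their time-averaged means on every sub-interval (via a Freedman-type martingale bound, union-bounded over a dyadic grid of intervals); the complementary event contributes only $O(1)$ expected regret by the $[0,1]$ boundedness of per-round regret.

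The per-phase argument should hinge on exhibiting a \emph{safe arm}. Within any phase $[\tau_i,\tau_{i+1})$, the definition of significant shifts guarantees at least one arm $\asharp$ whose cumulative regret against $a_s^*$ stays below $\sqrt{K\cdot(s_2-s_1)}$ on every sub-interval $[s_1,s_2]\subseteq[\tau_i,\tau_{i+1})$ — otherwise $\tau_{i+1}$ would already have occurred. I would then show that, on the good event, the algorithm's candidate set never evicts $\asharp$: the eviction rule should be tuned to trigger only when an arm's observed cumulative pairwise regret against a current candidate exceeds a $\sqrt{K\cdot(s_2-s_1)}$-type confidence radius, and here SST/STI converts $\asharp$'s safety into bounds on its \emph{pairwise} gaps. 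Concretely, SST yields $\delta_t(\asharp,i)\le \delta_t(a_t^*,i)$ whenever $\asharp \succ_t i$, while STI controls the other direction by $\delta_t(a_t^*,i)+\delta_t(a_t^*,\asharp)$, so summing shows $\asharp$ passes every elimination test.

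Next, any arm that survives as a candidate throughout a sub-interval must have small cumulative pairwise regret to $\asharp$, and via the same SST/STI inequalities, small cumulative regret to the Condorcet sequence $\{a_t^*\}$ (since the remainder $\delta_t(a_t^*,\asharp)$ is controlled by $\asharp$'s safety). Because \Cref{meta-alg} only plays pairs drawn from its current candidate set, summing over rounds inside the phase delivers the desired $\tilde{O}(\sqrt{K\cdot \ell_i})$ per-phase bound.

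The hardest part will be the adaptive handling of unknown shift times via a multi-scale scheduled-replay mechanism in the spirit of \citet{Suk22}: one must show that (i) at every scale $m$ a replay of length $m$ is launched with probability $\Omega(1/m)$, (ii) a genuine significant shift is detected by some such replay within $\tilde{O}(\sqrt{K\ell_i})$ additional regret, forcing a global restart, and (iii) the cost of running the replays — together with spurious restarts caused by insignificant drift in $\succ_t$ — does not inflate the final rate. The key novelty absent from prior significant-shift MAB analyses is that the underlying ordering $\succ_t$ may drift insignificantly at every round, so the candidate-selection rule cannot lean on any fixed ranking of arms as in the stationary $\ssti$ algorithms of \citet{YueJo11,Yue+12}; I expect one must bypass ranking entirely and use a cumulative pairwise-regret criterion (the novelty flagged in the introduction), with confidence radii calibrated precisely so that any drift in $\succ_t$ not crossing the significance threshold of \eqref{eq:sig-regret} provably cannot evict $\asharp$.
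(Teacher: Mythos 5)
Your outline captures several correct ingredients — the per-phase safe arm $\asharp$, a Freedman-type good event, SST/STI to convert $\asharp$'s safety into pairwise statements, and a Suk-style randomized replay schedule — but it has a genuine gap at the point that is the paper's actual technical contribution: you never commit to an estimation scheme, and the one you implicitly assume cannot work. Your per-phase argument reasons about arms' cumulative pairwise regret \emph{to $\asharp$} ("$\asharp$ passes every elimination test", "any arm that survives \ldots must have small cumulative pairwise regret to $\asharp$"), but $\delta_t(\asharp,a)$ is never observable: the algorithm does not know $\asharp$ and cannot afford to play all pairs — uniform all-pairs exploration is exactly what gives \citet{buening22} their suboptimal $K\sqrt{\Lsig T}$ rate, and "all relevant empirical pairwise preference estimates concentrate" would land you back there. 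The paper's mechanism is different: only the pair $(\hat{a}_t,a_t)$ is played, where $\hat{a}_t$ is a \emph{changing candidate arm} (previsible given the history) and $a_t$ is uniform on the active set; the importance-weighted estimator \eqref{eq:gap-estimate} then estimates $\sum_s\delta_s(\hat{a}_s,a)$ for every $a$ simultaneously at rate $\sqrt{K(s_2-s_1)}+K$, and both the eviction rule \eqref{eq:evict} and the candidate-switching rule \eqref{eq:switch} are phrased against the candidate \emph{sequence}, not against $\asharp$ or any fixed reference. The bridge back to $\asharp$ is \Cref{lem:upper-triangle-generic}: $\delta_t(\asharp,a)\le 2\delta_t(\asharp,\hat{a}_t)+\delta_t(\hat{a}_t,a)+3\delta_t(a_t^*,\asharp)$, where the first term is controlled by the switching criterion (since $\asharp$ is never evicted) and the third by $\asharp$'s safety, so that a "bad segment" for an arm $a$ relative to $\asharp$ forces $\sum_t\delta_t(\hat{a}_t,a)$ — the quantity the algorithm actually estimates — to be large, which a well-timed replay then detects. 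Without this reduction your step "a genuine significant shift is detected by some replay" has no estimable statistic to trigger on.

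Two smaller points. First, the replay schedule is $B_{s,m}\sim\mathrm{Bernoulli}\bigl(1/\sqrt{m\,(s-t_\ell)}\bigr)$, not probability $\Omega(1/m)$ per scale; the $1/\sqrt{\cdot}$ calibration is what makes the expected replay cost $\sum_{s,m}R(m)/\sqrt{m(s-t_\ell)}$ come out to $\log^{O(1)}(T)\sqrt{K(t_{\ell+1}-t_\ell)}$ rather than something larger. Second, to get $\sqrt{K}$ rather than $K$ you also need the accounting step that sums the per-arm eviction bounds weighted by $1/|\mathcal{A}_t|$ in order of eviction time (yielding a harmonic sum $\le\log K$), together with the proper/subproper partition of overlapping replays so that no round's regret is charged twice; neither appears in your sketch, and without them the sum over the $K$ arms of the per-arm $\sqrt{K(s_2-s_1)}$ bounds would cost an extra factor of $K$.
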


In fact, this regret rate can be transformed to depend on the {\em Condorcet winner variation} introduced in \citet{buening22} and the {\em total variation} quantities introduced in \cite{SahaNDB} and inspired by the total-variation quantity from non-stationary MAB \citep{besbes+14}. The following corollary is shown using just the definition of the non-stationarity measures. 

\begin{corollary}[Regret in terms of CW Variation]\label{cor:tv}
	Let $V_T := \sum_{t=2}^T \max_{a\in [K]} |P_t(a_t^*,a)-P_{t-1}(a_t^*,a)|$ be the unknown Condorcet winner variation. Using the same notation of \Cref{thm:metaswift}: \Cref{meta-alg} has expected dynamic regret
    \[
	    \mb{E}[\DR(T)] \leq C_0\log^{3}(T) \left(\sqrt{KT} + (KV_T)^{1/3} T^{2/3}\right).
    \]
\end{corollary}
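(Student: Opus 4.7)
The plan is to convert the bound $\sum_i \sqrt{KL_i}$ of \Cref{thm:metaswift} (with $L_i:=\tau_{i+1}-\tau_i$) into the variation-dependent rate via a standard pigeonhole argument: each significant phase must consume enough Condorcet winner variation, so $\Lsig$ cannot be large when $V_T$ is small. Concretely, I claim that $V_i \gtrsim \sqrt{K/L_i}$, where $V_i$ denotes the portion of $V_T$ falling in $[\tau_i,\tau_{i+1}]$. Applying \Cref{definition:sig-shift} at the shift time $\tau_{i+1}$ to the previous winner $a_{\tau_i}^*$ produces a sub-interval $[s_1,s_2]\subseteq[\tau_i,\tau_{i+1}]$ on which $a_{\tau_i}^*$ has significant regret, so by averaging there is some $s^* \in [s_1,s_2]$ with $\delta_{s^*}(a_{s^*}^*, a_{\tau_i}^*) \geq \sqrt{K/(s_2-s_1)} \geq \sqrt{K/L_i}$. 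The CW property at $\tau_i$ forces $P_{\tau_i}(a_{s^*}^*, a_{\tau_i}^*) \leq 1/2$, while the above gives $P_{s^*}(a_{s^*}^*, a_{\tau_i}^*) \geq 1/2 + \sqrt{K/L_i}$. Telescoping $P_t(a_t^*, a_{\tau_i}^*)$ across $t \in [\tau_i, s^*]$ and bounding each increment by the definition of $V_T$ (with some care at rounds where $a_t^*$ switches identity) then gives $V_i \gtrsim \sqrt{K/L_i}$.

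Given the per-phase claim, summing over $i$ yields $V_T \geq \sum_i V_i \gtrsim \sqrt{K}\sum_i L_i^{-1/2}$. The power-mean inequality under the constraint $\sum_i L_i = T$ forces $\sum_i L_i^{-1/2} \geq \Lsig^{3/2} T^{-1/2}$, hence $\Lsig \lesssim V_T^{2/3} T^{1/3} K^{-1/3}$. Cauchy--Schwarz applied to the bound of \Cref{thm:metaswift} gives $\sum_i \sqrt{KL_i} \leq \sqrt{K(\Lsig+1)T} \leq \sqrt{KT} + \sqrt{K\Lsig T}$; substituting the bound on $\Lsig$ yields $\sqrt{K\Lsig T} \lesssim (KV_T)^{1/3} T^{2/3}$, which is the claim of the corollary.

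The main obstacle is the telescoping in the per-phase claim. The definition of $V_T$ uses the round-$t$ CW winner $a_t^*$ in both $P_t(a_t^*,\cdot)$ and $P_{t-1}(a_t^*,\cdot)$, rather than comparing against $P_{t-1}(a_{t-1}^*,\cdot)$, so a naive telescoping picks up extra identity-swap terms $P_{t-1}(a_t^*, a_{\tau_i}^*) - P_{t-1}(a_{t-1}^*, a_{\tau_i}^*)$ at rounds where $a_t^* \neq a_{t-1}^*$. The cleanest resolution is to note that the CW property at both $t$ and $t-1$ forces any such identity swap to be accompanied by a row-level change in $P$ that is itself charged to $V_T$ (up to a constant factor), so the telescoping goes through with only a constant-factor loss. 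The remaining ingredients are routine inequalities.
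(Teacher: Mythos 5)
Your overall strategy is the right one, and it is evidently what the authors intend: the paper gives no written proof of \Cref{cor:tv} beyond the remark that it ``is shown using just the definition of the non-stationarity measures,'' and the pigeonhole argument you describe (each completed significant phase forces $V_i \gtrsim \sqrt{K/L_i}$ of CW variation, hence $\Lsig \lesssim (V_T)^{2/3}T^{1/3}K^{-1/3}$, then substitute into the Jensen form of \Cref{thm:metaswift}) is the standard route and your arithmetic checks out. Two small remarks on bookkeeping: the per-phase claim should only be invoked for the $\Lsig$ phases that actually terminate in a shift (the last phase contributes nothing), and the variation should be attributed to the half-open intervals $(\tau_i, s^*]$ so the $V_i$ are summed over disjoint round sets; neither affects the conclusion.

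The one step whose justification does not work as written is your resolution of the identity-swap terms in the telescoping. Writing $f(t):=P_t(a_t^*,a_{\tau_i}^*)$, the increment decomposes as $f(t)-f(t-1)=\bigl[P_t(a_t^*,a_{\tau_i}^*)-P_{t-1}(a_t^*,a_{\tau_i}^*)\bigr]+\bigl[P_{t-1}(a_t^*,a_{\tau_i}^*)-P_{t-1}(a_{t-1}^*,a_{\tau_i}^*)\bigr]$; the first bracket is charged to $V_T$, and the issue is the second. Your claim that the CW property at rounds $t-1$ and $t$ forces the swap to be ``accompanied by a row-level change charged to $V_T$'' only yields $P_{t-1}(a_t^*,a_{t-1}^*)\le 1/2\le P_t(a_t^*,a_{t-1}^*)$, which gives no lower bound on the $V_T$-increment and no control on the magnitude of the swap term in the row of the fixed arm $a_{\tau_i}^*$; under the CW condition alone the swap term can in fact be positive. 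The correct fix is simply that the corollary inherits the hypotheses of \Cref{thm:metaswift}, so SST is available: since $a_{t-1}^*$ is the top of the order $\succ_{t-1}$, SST gives $\delta_{t-1}(a_{t-1}^*,a)\ge\delta_{t-1}(a_t^*,a)$ for every $a$, so the swap term is nonpositive and can be dropped, and the telescoping goes through with no loss at all. With that one-line replacement your proof is complete.
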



\section{Algorithm}




At a high level, the strategy of recent works on non-stationary multi-armed bandits \citep{luo+19,wei21,Suk22} is to first design a suitable base algorithm and then use a meta-algorithm to randomly schedule different instances of this base algorithm at variable durations across time. The key idea is that unknown time periods of significant regret can be detected fast enough with the right schedule. In order to accurately identify significant shifts, the base algorithm in question should be robust to all non-significant shifts. In the multi-armed bandit setting, a variant of the classical successive elimination algorithm \citep{Even-Dar+06} possesses such a guarantee \citep{allesiardo2017non}, and serves as a base algorithm in \cite{Suk22}.

\subsection{Difficulty of Efficient Exploration of Arms.}
\label{subsec:difficulty}

In the non-stationary dueling problem, a natural analogue of successive elimination is to uniformly explore the arm-pair space $[K]\times[K]$ and eliminate arms based on observed comparisons \citep{Urvoy+13}. The previous work \citep[Theorem 5.1 of][]{buening22} employs such a strategy as a base algorithm. However, such a uniform exploration approach incurs a large estimation variance of $K^2$, which enters into the final regret bound of $K\sqrt{T\cdot \Lsig}$. Thus, smarter exploration strategies are needed to obtain $\sqrt{K}$ dependence.

In the stationary dueling bandit problem with $\ssti$, such efficient exploration strategies have long been known: namely, the Beat-The-Mean algorithm \citep{YueJo11} and the Interleaved Filtering (IF) algorithm \citep{Yue+12}.
We highlight that these existing algorithms
aim to learn the ordering of arms, i.e., arms are ruled out roughly in the same order as their true underlying ordering. This fact is crucial to attaining the optimal dependence in $K$ in their regret analyses, as
the higher ranked arms must be played more often
against other arms to avoid the $K^2$ cost of exploration.


However, in our setting, adversarial but non-significant changes in the ordering of arms could force perpetual exploration of lowest-ranked arms. This suggests that learning an ordering should not be a subtask of our desired dueling base algorithm. Rather, the algorithm should prioritize minimizing its own regret over time.
Keeping this intuition in mind, we introduce an
algorithm
called \bld{SW}itching \bld{I}nterleaved \bld{F}il\bld{T}ering (SWIFT) (see \Cref{base-alg} in \Cref{subsec:swift}) which directly tracks regret and avoids learning a fixed ordering of arms.

\paragraph{A new idea for switching candidate arms.}
A natural idea that is common to many dueling bandit algorithms (including IF) is to maintain a {\em candidate arm} $\hat{a}$ which is always played at each round, and serves as a reference point for partially ordering other arms in contention.
If the current candidate is beaten by another arm
then a new candidate is chosen, and this process
quickly converges to the best arm.
Since the ordering of arms may change at each round,
any such rule that relies on a fixed ordering is deemed to fail in our setting.
Our procedure does not rely on such a fixed ordering over arms, but instead tracks the aggregate regret $\sum_t \delta_t(a,\hat{a}_t)$ of the {\em changing sequence of candidate arms} $\{\hat{a}_t\}_t$ to another fixed arm $a$. Crucially, the candidate arm $\hat{a}_s$ is always played at round $s$ and so the history of candidate arms $\{\hat{a}_s\}_{s\leq t}$ is fixed at a round $t$. This fact allows us to estimate the quantity $\sum_{s=1}^{t} \delta_s(a,\hat{a}_s)$ using importance-weighting at $\sqrt{K\cdot t}$ rates via martingale concentration.
An algorithmic {\em switching criterion} then switches the candidate arm $\hat{a}_t$ to any arm $a$ dominating the sequence $\{\hat{a}_s\}_{s\leq t}$ over time, i.e., $\sum_{s=1}^t \delta_s(a,\hat{a}_s) \gg \sqrt{K\cdot t}$. This simple, yet powerful, idea immediately gives us control of the regret of the candidate sequence $\{\hat{a}_t\}_t$ which allows us to bypass the ranking-based arguments of vanilla IF and Beat-The-Mean.
It also allows us to simultaneously bound the regret of a
sub-optimal arm $a$ against the sequence of candidate arms
$\sum_{s=1}^t \delta_s(\hat{a}_s, a)$.





\subsection{Switching Interleaved Filtering (SWIFT)}
\label{subsec:swift}

\swift at round $t$ compares a {\em candidate arm} $\hat{a}_t$ with an arm $a_t$ (chosen uniformly at random) from an {\em active arm set} $\mc{A}_t$. Additionally, \swift maintains estimates $\hat{\delta}_t(\hat{a}_t,a)$ of $\delta_t(\hat{a}_t,a)$ which are used to (1) evict active arms $a\in\mc{A}_t$ and (2) {\em switch} the candidate arm $\hat{a}_{t+1}$ for the next round.

\paragraph{Estimators and Eviction/Switching Criteria.}

Let $\mc{A}_t$ be the active arm set at round $t$. Let
\begin{equation}\label{eq:gap-estimate}
	\hat{\delta}_t(\hat{a}_t,a) := |\mc{A}_t|\cdot O_t(\hat{a}_t,a)\cdot \pmb{1}\{(i_t,j_t)=(\hat{a}_t,a)\} - 1/2,
\end{equation}
which is an unbiased estimator of the gap $\delta_t(\hat{a}_t,a)$ when $a\in\mc{A}_t$. We {\em evict} an active arm $a$ from $\mc{A}_t$ at round $t$ if for some constant $C>0$\footnote{ The constant $C>0$ does not depend on $T$, $K$, or $\Lsig$, and a suitable value can be derived from the regret analysis.}
and rounds $s_1<s_2\leq t$:
\begin{equation}\label{eq:evict}
	\sum_{s=s_1}^{s_2} \hat{\delta}_s(\hat{a}_s,a) \geq C\log(T)\sqrt{K\cdot (s_2-s_1) \vee K^2},
\end{equation}
where $\hat{\delta}_s(a,\hat{a}_s) := -\hat{\delta}_s(\hat{a}_s,a)$. Next, we switch the next candidate arm $\hat{a}_{t+1}\leftarrow a$ to another arm $a\in\mc{A}_t$ at round $t$ if for some round $s_1 < t$:
\begin{equation}\label{eq:switch}
	\sum_{s=s_1}^{t} \hat{\delta}_s(a,\hat{a}_s) \geq C\log(T)\sqrt{K\cdot (t-s_1) \vee K^2}.
\end{equation}


\swift is formally shown in \cref{base-alg}, defined for generic start time $\tstart$ and duration $m_0$ so as to allow for recursive calls in our meta-algorithm framework.


\subsection{Non-Stationary Algorithm (\metaswift)}

\begin{algorithm2e}[h] \small 
\caption{{\bld{M}eta-\bld{E}limination while \bld{T}racking \bld{A}rms in \swift (\metaswift)}}
\label{meta-alg}
	{\nonl \bld{Input:} horizon $T$.}\\
	  \bld{Initialize:} round count $t \leftarrow 1$.\\
	  \textbf{Episode Initialization (setting global variables {\normalfont $t_{\ell},\Amaster,B_{s,m}$})}:\\
	  \Indp $t_{\ell} \leftarrow t$. \label{line:ep-start}  \tcp*{$t_{\ell}$ indicates start of $\ell$-th episode.}
	  $\Amaster \leftarrow [K]$ \label{line:define-end} \tcp*{Master active arm set.}
    For each $m=2,4,\ldots,2^{\lceil\log(T)\rceil}$ and $s=t_{\ell}+1,\ldots,T$:\\
    \Indp Sample and store $B_{s,m} \sim \text{Bernoulli}\left(\frac{1}{\sqrt{m\cdot (s-t_{\ell})}}\right)$. \label{line:add-replay} \tcp*{Set replay schedule.}
        \Indm
	\Indm
	 \vspace{0.2cm}
	 Run $\base(t_{\ell},T + 1 - t_{\ell})$. \label{line:ongoing-base} \\
  \lIf{$t < T$}{restart from Line 2 (i.e. start a new episode).
  \label{line:restart}}
\end{algorithm2e}

\begin{algorithm2e}[h]\label{base-alg}
 	\caption{$\base(\tstart,m_0)$: \swift starting at $t_0$ and running $m_0$ rounds}
	 {\nonl \textbf{Input}: starting round $\tstart$, scheduled duration $m_0$.}\\
	 \textbf{Initialize (Global) Variables}: $t\leftarrow \tstart$, $\mc{A}_t \leftarrow [K]$, $\hat{a}_t \leftarrow \Unif\{[K]\}$.\\
 	  \While{$t\leq T$}{
		  Select a random arm $a_t\in \mc{A}_t$ with probability $1/|\mc{A}_t|$ and play $(\hat{a}_t,a_t)$. \label{line:play-base}\\
		  Let $\mc{A}_{\text{current}} \leftarrow \mc{A}_{t}$.  \tcp*{Save current active arm set $\mc{A}_t$ (global variable).} \label{line:current-arm-set}
          Increment $t \leftarrow t+1$.\\
	  \uIf(\tcc*[f]{See \Cref{meta-alg} for definition of $B_{s,m}$}){$\exists m\text{{\normalfont\,such that }} B_{t,m}>0$}{
                 Let $m := \max\{m \in \{2,4,\ldots,2^{\lceil \log(T)\rceil}\}:B_{m,t}>0\}$. \tcp*{Set replay length.}
                 Run $\base(t,m)$.\label{line:replay} \tcp*{Replay interrupts.}
 		   }
		   \lIf{$t > \tstart + m_0$}{
		   RETURN.
			}
			Evict bad arms:\\
			\Indp $\mc{A}_t\leftarrow \mc{A}_{\text{current}} \bs \{a\in[K]:\text{$\exists$ rounds $[s_1,s_2]\subseteq [\tstart,t)$ s.t. \eqref{eq:evict} hold}\}$.\\
			$\Amaster \leftarrow \Amaster \bs \{a\in[K]:\text{$\exists$ rounds $[s_1,s_2]\subseteq [t_{\ell},t)$ s.t. \eqref{eq:evict} hold}\}$.\\
			\Indm
			\uIf{{\normalfont\eqref{eq:switch} holds for some arm $a\in\mc{A}_t$}}{
				Switch candidate arm: $\hat{a}_{t} \leftarrow a$. \tcp*{Set candidate arm $\hat{a}_t$ (global variable).}
		}
		\Else{
			$\hat{a}_{t} \leftarrow \hat{a}_{t-1}$.
		}
	\bld{Restart criterion:} \lIf{$\normalfont\Amaster=\emptyset$}{RETURN.}
	}
	RETURN.
\end{algorithm2e}

For the non-stationary setting with multiple (unknown) significant shifts, we run \swift as a base algorithm at randomly scheduled rounds and durations.


Our algorithm, dubbed \metaswift and found in \Cref{meta-alg}, operates in {\em episodes}, starting each episode by playing a \emph{base algorithm} instance of \swift. 
A running base algorithm {\em activates} its own base algorithms of varying durations (\Cref{line:replay} of \Cref{base-alg}), called {\em replays} according to a random schedule decided by the Bernoulli's $B_{s,m}$ (see \Cref{line:add-replay} of \Cref{meta-alg}).
We refer to the (unique) base algorithm playing at round $t$ as the {\em active base algorithm}. 

\paragraph{Global Variables.}
The {\em active arm set} $\mc{A}_t$ is pruned by the active base algorithm at round $t$, and globally shared between all running base algorithms. In addition, all other variables, i.e. the $\ell$-th episode start time $t_{\ell}$, round count $t$, schedule $\{B_{s,m}\}_{s,m}$, and candidate arm $\hat{a}_t$ (and thus the quantities $\delta_t(\hat{a}_t,a)$) are shared between base algorithms. Thus, while active, each $\base$ can switch the candidate arm \eqref{eq:switch} and evict arms \eqref{eq:evict} over all intervals $[s_1,s_2]$ elapsed since it began.

Note that only one base algorithm (the active one) can edit $\mc{A}_t$ and set the candidate arm $\hat{a}_t$ at round $t$, while other base algorithms can access these global variables at later rounds. By sharing these global variables, any replay can trigger a new episode: every time an arm is evicted by a replay, it is also evicted from the {\em master arm set} $\Amaster$, tracking arms' regret throughout the entire episode. A new episode is triggered when $\Amaster$ becomes empty, i.e., there is no \emph{safe} arm left to play. 

\section{Regret Analysis}\label{sec:regret-analysis}

\ifx
\subsection{Regret of \swift in Mildly Changing Environments}
As a warmup, we first show \swift attains $\tilde{O}(\sqrt{KT})$ regret in environments with no significant CW switch. The techniques employed in this starter result will serve helpful in bounding the regret of \metaswift in the more general non-stationary setting with significant CW switches.

\begin{proposition}\label{prop:swift}
	Suppose SST and STI hold, and that there are zero significant Condorcet winner switches in $T$ rounds. Then, for some constant $C_0>0$, \swift has expected dynamic regret
	\[
		\mb{E}[\DR(T)] \leq C_0\log(T)\sqrt{KT}.
	\]
\end{proposition}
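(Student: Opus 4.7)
The plan is to exploit the \emph{safe arm} $a^\sharp$ guaranteed by $\Lsig=0$: by \Cref{definition:sig-shift}, there exists some $a^\sharp\in[K]$ whose cumulative regret against $a_s^*$ on every subinterval $[s_1,s_2]$ is strictly below $\sqrt{K(s_2-s_1)}$. First I would set up a high-probability event $\mc{G}$ on which the importance-weighted estimators \eqref{eq:gap-estimate} concentrate: since $\hat{\delta}_s(\hat{a}_s,a)\pmb{1}\{a\in\mc{A}_s\}$ is a bounded martingale-difference sequence with conditional variance at most $|\mc{A}_s|\leq K$ (courtesy of the $\Unif(\mc{A}_s)$ sampling in \Cref{line:play-base} of \Cref{base-alg}), a Freedman-type inequality combined with a union bound over arms and over dyadic start/end times gives
\[
	\left|\sum_{s=s_1}^{s_2}\bigl(\hat{\delta}_s(\hat{a}_s,a)-\delta_s(\hat{a}_s,a)\pmb{1}\{a\in\mc{A}_s\}\bigr)\right| \lesssim \log(T)\sqrt{K(s_2-s_1)\vee K^2}
\]
uniformly over $a\in[K]$ and $[s_1,s_2]\subseteq [1,T]$. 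The constants in the eviction \eqref{eq:evict} and switching \eqref{eq:switch} criteria are calibrated so that, on $\mc{G}$, neither triggers unless the true underlying gap sum is genuinely large.

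Next I would show that, on $\mc{G}$, the safe arm $a^\sharp$ is never evicted. A two-case SST argument (using SST on $a_s^*\succ_s\hat{a}_s\succ_s a^\sharp$ in the first case, and using $\delta_s(\hat{a}_s,a^\sharp)\leq 0$ when $a^\sharp\succeq_s\hat{a}_s$) yields $\delta_s(\hat{a}_s,a^\sharp)\leq \delta_s(a_s^*,a^\sharp)$ pointwise in $s$, so the safe-arm property controls every subinterval sum by $\sqrt{K(s_2-s_1)}$, which the concentration slack keeps below the eviction threshold for suitable $C$.

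I then decompose the dynamic regret as $\DR(T)=\tfrac12\sum_t\delta_t(a_t^*,\hat{a}_t)+\tfrac12\sum_t\delta_t(a_t^*,a_t)$. For the \emph{candidate term}, STI (or SST when $\hat{a}_t\succ_t a^\sharp$) gives $\delta_t(a_t^*,\hat{a}_t)\leq \delta_t(a_t^*,a^\sharp)+[\delta_t(a^\sharp,\hat{a}_t)]_+$; the first part sums to $\leq\sqrt{KT}$ by the safe-arm property. For the second, I would partition $[1,T]$ by candidate-switch times $\sigma_1<\sigma_2<\cdots$; since $a^\sharp$ did not trigger \eqref{eq:switch} on any $[\sigma_i,t]\subseteq [\sigma_i,\sigma_{i+1})$, concentration upper-bounds $\sum_{s=\sigma_i}^{\sigma_{i+1}-1}\delta_s(a^\sharp,\hat{a}_s)$ by $\log(T)\sqrt{K(\sigma_{i+1}-\sigma_i)\vee K^2}$, and bounding the positive part by the sum plus a safe-arm correction (itself $\leq \sqrt{KT}$ via SST) costs at most another $\sqrt{KT}$. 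For the \emph{challenger term}, STI again gives $\delta_t(a_t^*,a_t)\leq\delta_t(a_t^*,\hat{a}_t)+[\delta_t(\hat{a}_t,a_t)]_+$; taking expectation over $a_t\sim\Unif(\mc{A}_t)$, the eviction rule \eqref{eq:evict}, which caps every surviving arm's cumulative estimated regret against the candidate sequence, together with concentration, bounds $\sum_t\mb{E}[\delta_t(\hat{a}_t,a_t)]\lesssim \log(T)\sqrt{KT}$ once averaged over $\mc{A}_t$.

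The main obstacle is controlling the number $N$ of candidate switches in the partition: \eqref{eq:switch} is recursive over all starting points $s_1<t$ and nothing in the switching rule alone prevents an arm from re-emerging as candidate after being abandoned. My plan is a charging argument: each switch certifies a $\tilde\Omega(\sqrt{K\Delta_i})$ estimated dominance of the new candidate over the old sequence on an interval of length $\Delta_i$, so Cauchy--Schwarz gives $\sum_i\sqrt{K\Delta_i}\leq\sqrt{NKT}$, and I would then bound $N=\tilde O(1)$ either by observing that an arm evicted during a sub-phase cannot be switched to afterward, or by a potential argument on the cumulative estimated gaps that shows revisits are impossible on $\mc{G}$. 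Assembling the two terms on $\mc{G}$ yields $\mb{E}[\DR(T)]\leq C_0\log(T)\sqrt{KT}$; the $O(1/T)$ failure probability contributes at most $O(1)$ extra expected regret, completing the proof.
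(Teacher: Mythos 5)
Most of your outline tracks the paper's argument: the safe arm $a^{\sharp}$, the Freedman/union-bound concentration event, the pointwise SST/STI comparison showing $a^{\sharp}$ is never evicted, and the Suk-style tower-law treatment of the challenger term are all essentially what the paper does. The genuine gap is in your treatment of the candidate term $\sum_t \delta_t(a^{\sharp},\hat{a}_t)$. You partition $[1,T]$ by candidate-switch times and then need the number of switches $N$ to be $\tilde O(1)$, which you flag as the main obstacle; neither of your proposed fixes closes it. Eviction does not limit switches, because in an environment with $\Lsig=0$ several arms can remain active forever, and the ``no revisits'' potential argument is false: with insignificantly drifting preferences two never-evicted arms can alternately accumulate $C\log(T)\sqrt{K\Delta}$ of estimated dominance over each other arbitrarily many times, so $N$ can be polynomial in $T$ and your Cauchy--Schwarz bound $\sqrt{NKT}$ (plus the $N\cdot K\log T$ floor coming from the ``$\vee K^2$'' in the threshold) is vacuous.

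The point you are missing is that the switching rule \eqref{eq:switch} is deliberately evaluated over \emph{all} intervals $[s_1,t]$ ending at the current round --- in particular over $[1,t]$ --- and the sum it controls, $\sum_{s=1}^{t}\hat{\delta}_s(a^{\sharp},\hat{a}_s)$, is measured against the \emph{changing} candidate sequence $\{\hat{a}_s\}_{s\le t}$, not against the single current candidate. Since $a^{\sharp}$ stays in $\mc{A}_t$, at the last round $t$ with $\hat{a}_t\neq a^{\sharp}$ the criterion must have failed for $a^{\sharp}$ on $[1,t]$, so $\sum_{s=1}^{T}\hat{\delta}_s(a^{\sharp},\hat{a}_s)\le C\log(T)\sqrt{Kt\vee K^2}$ in one shot, and concentration converts this to $\sum_{s=1}^{T}\delta_s(a^{\sharp},\hat{a}_s)\lesssim \log(T)\sqrt{KT}$ with no switch counting at all. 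This is exactly the ``new rule for selecting candidate arms'' the paper advertises, and it is what makes the proposition go through; replacing your per-block partition with this single-interval application of \eqref{eq:switch} repairs the proof. (Your handling of the challenger term also glosses over the ordering-by-eviction-time step that turns $K$ per-arm bounds of $\sqrt{KT}$ into a $\log(K)\sqrt{KT}$ total via $|\mc{A}_t|\ge K+1-k$, but that is a fillable omission rather than a wrong turn.)
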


    We next give an overview of proving \Cref{prop:swift} (missing details can be found in \Cref{subsec:missing-prop-swift}).	By \Cref{definition:sig-shift}, since there are no significant Condorcet winner switches in $T$ rounds, there is a {\em safe arm}, call it $\asharp$, which does not incur significant regret over $T$ rounds. Using the SST and STI conditions, we first decompose the dynamic regret into the dynamic regret of the safe arm plus the regret of the algorithm to the safe arm. Let $a_t$ be the arm selected on \Cref{line:play}. Then, at round $t$, the arms $\hat{a}_t$ and $a_t$ are played by \swift. So, the dynamic regret is by \Cref{lem:upper-triangle}:
	\begin{align}\label{eq:regret-decomp-swift}
		\sum_{t=1}^T &\delta_t(a^*,\hat{a}_t) + \delta_t(a^*,a_t)\leq
						       \sum_{t=1}^T 6\delta_t(a^*,\asharp) + 3\delta_t(\asharp,\hat{a}_t) + \delta_t(\hat{a}_t,a_t).\numberthis
	\end{align}
	The sum over the first term on the RHS above is at most $6\sqrt{KT}$ by \Cref{definition:sig-shift} since the safe arm does not incur more than $\sqrt{KT}$ regret.

    Meanwhile, the

 It remains to bound the sum over the last two terms above. First, we relate these two sums to our estimators $\sum_t \hat{\delta}_t(\hat{a}_t,a)$ in \eqref{eq:evict} and \eqref{eq:switch}.

	\paragraph{Estimating the Gaps over Intervals.}

	We next apply Freedman's inequality for martingale concentration (\Cref{lem:martingale-concentration}) to bound the estimation error of our estimates $\hat{\delta}_t(\hat{a}_t,a)$, found in \eqref{eq:gap-estimate}.

	\begin{proposition}\label{prop:concentration}
    Let $\mc{E}_1$ be the event that for all rounds $s_1<s_2$ and all arms $a\in [K]$:
	\begin{align}\label{eq:error-bound}
		\left| \sum_{t=s_1}^{s_2} \hat{\delta}_t(\hat{a}_t,a) - \right. & \left. \sum_{t=s_1}^{s_2} \mb{E}\left[\hat{\delta}_t(\hat{a}_t,a)\mid \mc{F}_{t-1}\right] \right|
		\leq c_1\log(T) \left( \sqrt{K(s_2-s_1)} + K\right),
	\end{align}
    for an appropriately large constant $c_1$, and where $\mc{F} := \{\mc{F}_t\}_{t=1}^T$ is the canonical filtration generated by observations and randomness of elapsed rounds. Then, $\mc{E}_1$ occurs with probability at least $1-1/T^2$. 
	\end{proposition}

	Since the contribution to the expected regret is small outside of the high-probability good event $\mc{E}_1$, going forward we will assume that \eqref{eq:error-bound} holds as necessary, for all arms $a\in[K]$ and rounds $s_1,s_2$. The next result asserts that the safe arm $\asharp$ is not evicted under $\mc{E}_1$ which follows readily from our eviction criterion \eqref{eq:evict} and \Cref{prop:concentration}:

	\begin{lemma}\label{lem:safe-not-evicted}
		Under event $\mc{E}_1$, the safe arm $a^{\sharp}$ is never evicted from $\mc{A}_t$.
	\end{lemma}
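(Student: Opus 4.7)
The plan is to argue by induction on time (equivalently, by contradiction at the first round of eviction). Suppose for contradiction that $a^\sharp$ is evicted at some round $t$, and let $t$ be the earliest such round. Then $a^\sharp \in \mc{A}_s$ for all $s<t$, and by the eviction criterion \eqref{eq:evict} there must exist rounds $[s_1,s_2]\subseteq[\tstart,t)$ such that
\[
\sum_{s=s_1}^{s_2}\hat{\delta}_s(\hat{a}_s,a^\sharp)\;\geq\; C\log(T)\sqrt{K(s_2-s_1)\vee K^2}.
\]
Because $a^\sharp$ was active throughout $[s_1,s_2]$, the estimator $\hat{\delta}_s(\hat{a}_s,a^\sharp)$ is a genuine unbiased estimator with $\mb{E}[\hat{\delta}_s(\hat{a}_s,a^\sharp)\mid \mc{F}_{s-1}]=\delta_s(\hat{a}_s,a^\sharp)$, so the concentration inequality \eqref{eq:error-bound} guaranteed by the good event $\mc{E}_1$ applies on this interval.

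The first key step is to replace the sum of estimators by the sum of true gaps $\sum_s \delta_s(\hat{a}_s,a^\sharp)$ up to the additive error $c_1\log(T)(\sqrt{K(s_2-s_1)}+K)$. The second step is to bound $\delta_s(\hat{a}_s,a^\sharp)$ by $\delta_s(a_s^*,a^\sharp)$ pointwise, using SST: if $\hat{a}_s \succeq_s a^\sharp$ under the (time-varying) ordering, then SST applied to $a_s^*\succeq_s\hat{a}_s\succeq_s a^\sharp$ gives the bound directly; otherwise $\delta_s(\hat{a}_s,a^\sharp)\leq 0 \leq \delta_s(a_s^*,a^\sharp)$. The third step is to invoke the safe-arm property (i.e.\ the negation of \eqref{eq:sig-regret} for $a^\sharp$ in \Cref{definition:sig-shift}) to conclude
\[
\sum_{s=s_1}^{s_2}\delta_s(\hat{a}_s,a^\sharp)\;\leq\;\sum_{s=s_1}^{s_2}\delta_s(a_s^*,a^\sharp)\;\leq\;\sqrt{K(s_2-s_1)}.
\]

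Combining these three bounds yields
\[
\sum_{s=s_1}^{s_2}\hat{\delta}_s(\hat{a}_s,a^\sharp)\;\leq\;(1+c_1\log(T))\sqrt{K(s_2-s_1)}\,+\,c_1\log(T)\,K,
\]
which, using $\sqrt{K(s_2-s_1)\vee K^2}\geq\max\{\sqrt{K(s_2-s_1)},K\}$, is strictly smaller than $C\log(T)\sqrt{K(s_2-s_1)\vee K^2}$ once $C$ is chosen large enough relative to $c_1$. This contradicts the supposed triggering of the eviction criterion, so $a^\sharp$ is never evicted.

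The only potentially delicate step is the pointwise comparison $\delta_s(\hat{a}_s,a^\sharp)\leq \delta_s(a_s^*,a^\sharp)$: the ordering $\succ_s$ can change each round and $\hat{a}_s$ can lie on either side of $a^\sharp$ in that ordering, but the dichotomy above handles both cases cleanly thanks to SST and the Condorcet winner property. Everything else is bookkeeping: recording that the induction hypothesis (namely $a^\sharp\in\mc{A}_s$ for $s<t$) is exactly what is needed to apply the unbiasedness/concentration of the importance-weighted estimator on $[s_1,s_2]$, and selecting the constant $C$ in \eqref{eq:evict} large enough (e.g. $C\geq 2c_1+2$) to close the argument.
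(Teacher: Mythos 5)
Your proof is correct and follows essentially the same route as the paper: the paper dispatches this lemma by noting it ``follows readily'' from the eviction criterion \eqref{eq:evict} and \Cref{prop:concentration}, with the detailed chain (concentration $\Rightarrow$ eviction implies $\sum_s \delta_s(\hat{a}_s,a)\gtrsim\sqrt{K(s_2-s_1)}$, then SST to pass to $\sum_s\delta_s(a_s^*,a)$, contradicting the safe-arm property) appearing verbatim in the proof of \Cref{lem:counting-eps}. Your first-eviction-time framing and the explicit two-case SST/Condorcet comparison are just a slightly more careful writeup of the same argument.
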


     \paragraph{Bounding Regret of Candidate Arm to Safe Arm.}
	\Cref{lem:safe-not-evicted} allows us to directly bound the sum over the second term in the RHS of \eqref{eq:regret-decomp-swift}. Since $\asharp$ is never evicted, we know that $\sum_{s=1}^t \hat{\delta}_s(\asharp,\hat{a}_s)\leq C\log(T) \sqrt{K\cdot t}$ at all rounds $t>K$ for which $\asharp\neq \hat{a}_{t+1}$.
	If $\hat{a}_t=\asharp$ for all rounds $t$, we are already done; otherwise, consider the largest round $t\in[T]$ for which $\hat{a}_t\neq \asharp$ in which case we must have by concentration, $\sum_{s=1}^T \delta_s(\asharp,\hat{a}_s) = \sum_{s=1}^t \delta_s(\asharp,\hat{a}_s) \lesssim \log(T)\sqrt{K(t-1)}+1$ (since $\hat{a}_t$ was not assigned to $\asharp$ in the previous round).

	\paragraph{Bounding Regret of Other Arms to Candidate.}
        This follows in the same fashion as the first part of Section B.1 of \citet{Suk22}. By carefully conditioning on random variables in the right order, we can re-write $\mb{E}[\sum_{t=1}^T \delta_t(\hat{a}_t,a_t)] = \mb{E}[\sum_{t=1}^T \sum_{a\in\mc{A}_t} \delta_t(\hat{a}_t,a)/|\mc{A}_t|]$. Then, intuitively, before an arm $a$ is evicted we have that $\sum_{s=1}^t \delta_s(\hat{a}_s,a)\lesssim \sqrt{K\cdot t}$. Plugging this into the above expectation gives us the desired bound.
\fi

\subsection{Regret of \metaswift over Significant Phases}
\label{subsec:metaswift-regret}

Now, we turn to sketching the proof of \Cref{thm:metaswift}. 
Full details are found in \Cref{sec:missing-thm}.

\paragraph{Decomposing the Regret.}

    Let $\atsharp$ denote the {\em last safe arm} at round $t$, or the last arm to incur significant regret in the unique phase $[\tau_i,\tau_{i+1})$ containing round $t$. Then, we can decompose the dynamic regret around this safe arm using SST and STI (i.e., using \Cref{lem:upper-triangle} twice) as:
	\[
		\sum_{t=1}^T \delta_t(a_t^*,\hat{a}_t) + \delta_t(a_t^*,a_t) \leq
		6 \sum_{t=1}^T \delta_t(a_t^*,\atsharp) + 3 \sum_{t=1}^T \delta_t(\atsharp,\hat{a}_t) + \sum_{t=1}^T \delta_t(\hat{a}_t,a_t),
	\]
	where we recall that $a_t\in \mc{A}_t$ is the other arm played (\Cref{line:play-base} of \Cref{base-alg}). 
	Next, the first sum on the above RHS is order $\sum_{i=1}^{\Lsig} \sqrt{K\cdot (\tau_i-\tau_{i-1})}$ as the last safe arm $\atsharp$ does not incur significant regret on $[\tau_i,\tau_{i+1})$. 
	So, it remains to bound the last two sums on the RHS above. 

	\paragraph{Episodes Align with Significant Phases.}

	   We claim that a new episode is triggered only if there a significant shift occurs (\Cref{lem:counting-eps}). This follows from our eviction criteria \eqref{eq:evict} with Freedman's inequality for martingale concentration (\cref{lem:martingale-concentration}). Then, acknowledging episodes roughly align with significant phases, we turn our attention to bounding the remaining regret in each episode.

	\paragraph{Bounding Regret of an Episode.}

	Let $t_{\ell}$ be the start of the $\ell$-th episode of \metaswift. Then, our goal is to show for all $\ell\in[\hat{L}]$ (where $\hat{L}$ is the random number of episodes used by the algorithm):
	\begin{equation}\label{eq:regret-decomp}
		\max\Bigg\{    \mb{E}\left[ \sum_{t=t_{\ell}}^{t_{\ell+1}-1} \delta_t(\atsharp,\hat{a}_t) \right] , \mb{E}\left[ \sum_{t=t_{\ell}}^{t_{\ell+1}-1} \delta_t(\hat{a}_t,a_t)\right] \Bigg\}
						 \lesssim \sum_{i\in[\Lsig+1]:[\tau_{i-1},\tau_i)\cap [t_{\ell},t_{\ell+1})\neq\emptyset} \sqrt{K\cdot (\tau_i-\tau_{i-1})},
	\end{equation}
	where the RHS sum above is over the significant phases $[\tau_{i-1},\tau_i)$ overlapping episode $[t_{\ell},t_{\ell+1})$. Summing over episodes $\ell\in[\hat{L}]$ will then yield the desired total regret bound by our earlier observation that the episodes align with significant phases (see \Cref{lem:counting-eps}). 

\paragraph{Bounding Regret of Active Arms to Candidate Arms.}

   Bounding $\sum_{t=t_{\ell}}^{t_{\ell+1}-1} \delta_t(\hat{a}_t,a_t)$ follows in a similar manner as Appendix B.1 of \cite{Suk22}. First, observe by concentration (\Cref{lem:martingale-concentration}) the eviction criterion \eqref{eq:evict} bounds the sums $\sum_{t=s_1}^{s_2} \delta_t(\hat{a}_t,a)$ over intervals $[s_1,s_2]$ where $a$ is active.
    Then, accordingly, we further partition the episode rounds $[t_{\ell},t_{\ell+1})$ into different intervals distinguishing the unique regret contributions of different active arms from varying base algorithms, on each of which we can relate the regret to our eviction criterion.
 Details can be found in \Cref{app:active-candidate}.

    \paragraph{$\bullet$ Bounding Regret of Candidate Arm to Safe Arm.}

	The first sum on the LHS of \eqref{eq:regret-decomp} will be further decomposed using the {\em last master arm} $a_{\ell}$ which is the last arm to be evicted from the master arm set $\Amaster$ in episode $[t_{\ell},t_{\ell+1})$. 
    Carefully using SST and STI (see \Cref{lem:upper-triangle-generic}), we further decompose $\delta_t(\atsharp,\hat{a}_t)$ as:
	\begin{equation}
		 \sum_{t=t_{\ell}}^{t_{\ell+1}-1} \delta_t(a_t^{\sharp},\hat{a}_t) \leq
		 2 \underbrace{\sum_{t=t_{\ell}}^{t_{\ell+1}-1} \delta_t(\atsharp,a_{\ell})}_{\mytag{A}{item:master-safe}} + \underbrace{\sum_{t=t_{\ell}}^{t_{\ell+1}-1}\delta_t(a_{\ell},\hat{a}_t)}_{\mytag{B}{item:master-candidate}} + 3\underbrace{\sum_{t=t_{\ell}}^{t_{\ell+1}-1} \delta_t(a_t^*,\atsharp)}_{\mytag{C}{item:safe}}
	\end{equation}
	The sum \ref{item:safe} above was already bounded earlier. So, we turn our attention to \ref{item:master-candidate} and \ref{item:master-safe}.

	\paragraph{$\bullet$ Bounding \ref{item:master-candidate}.}

Note that the arm $a_{\ell}$ by definition is never evicted by any base algorithm until the end of the episode $t_{\ell+1}-1$. This means that at round $t\in[t_{\ell},t_{\ell+1})$, the quantity $\sum_{s=t_{\ell}}^t \hat{\delta}_s(a_{\ell},\hat{a}_s)$ is always kept small by the candidate arm switching criterion \eqref{eq:switch}. So, by concentration (\Cref{prop:concentration}), we have $\sum_{s=t_{\ell}}^{t_{\ell+1}-1} \hat{\delta}_s(a_{\ell},\hat{a}_s) \lesssim \sqrt{K(t_{\ell+1}-t_{\ell})}$.

	\paragraph{$\bullet$ Bounding \ref{item:master-safe}}

	The main intuition here, similar to Appendix B.2 of \citet{Suk22}, is that well-timed replays are scheduled w.h.p. to ensure fast detection of large regret of the last master arm $a_{\ell}$. Key in this is the notion of a {\em bad segment of time}: i.e., an interval $[s_1,s_2]\subseteq [\tau_i,\tau_{i+1})$ lying inside a significant phase with last safe arm $\asharp$ where:
	\begin{equation}\label{eq:bad-segment-body}
		\sum_{t=s_1}^{s_2} \delta_t(\asharp,a_{\ell}) \gtrsim \sqrt{K\cdot (s_2-s_1)}.
	\end{equation}
	\vspace{-1mm}
For a fixed bad segment $[s_1,s_2]$, the idea is that a fortuitously timed replay scheduled at round $s_1$ and remaining active till round $s_2$ will evict arm $a_{\ell}$.

	It is not immediately obvious how to carry out this argument in the dueling bandit problem since, to detect large $\sum_t \delta_t(a^{\sharp},a_{\ell})$, the pair of arms $\asharp,a_{\ell}$ need to both be played which, as we discussed in \Cref{subsec:difficulty}, may not occur often enough to ensure tight estimation of the gaps.

	Instead, we carefully make use of SST/STI to relate $\delta_t(a^{\sharp},a_{\ell})$ to $\delta_t(\hat{a}_t,a_{\ell})$. Note this latter quantity controls both the eviction \eqref{eq:evict} and $\hat{a}_t$ switching \eqref{eq:switch} criteria. This allows us to convert bad intervals with large $\sum_t \delta_t(a_t^{\sharp},a_{\ell})$ to intervals with large $\sum_t \delta_t(\hat{a}_t,a_{\ell})$. Specifically,
	by \Cref{lem:upper-triangle-generic}, we have that \eqref{eq:bad-segment-body} implies
	\begin{equation}\label{ineq:bad-seg}
		2 \sum_{t=s_1}^{s_2} \delta_t(\asharp,\hat{a}_t)+  \sum_{t=s_1}^{s_2} \delta_t(\hat{a}_t,a_{\ell}) + 3 \sum_{t=s_1}^{s_2} \delta_t(a_t^*,\asharp) \gtrsim \sqrt{K\cdot (s_2-s_1)}.
	\end{equation}
	Then, we claim that, so long as a base algorithm $\base(s_1,m)$ is scheduled from $s_1$ running till $s_2$, we will have $\sum_{t=s_1}^{s_2} \delta_t(\hat{a}_t,a_{\ell}) \gtrsim \sqrt{K\cdot (s_2-s_1)}$ which implies $a_{\ell}$ will be evicted. In other words, the second sum dominates the first and third sums in \eqref{ineq:bad-seg}. We repeat earlier arguments to show this:
	\begin{itemize}[leftmargin=5mm]
		\item By the definition of the last safe arm $\asharp$, $\sum_{t=s_1}^{s_2} \delta_t(a_t^*,\asharp) < \sqrt{K\cdot (s_2-s_1)}$.
		\item Meanwhile, $\sum_{t=s_1}^{s_2} \delta_t(\asharp,\hat{a}_t) \lesssim \sqrt{K\cdot (s_2-s_1)}$ by the candidate switching criterion \eqref{eq:switch} and because $\asharp$ will not be evicted before round $s_2$ lest it incurs significant regret which cannot happen by definition of $\asharp$.
	\end{itemize}
	\vspace{-1mm}
	Combining the above two points with \eqref{ineq:bad-seg}, we have that $\sum_{t=s_1}^{s_2} \delta_t(\hat{a}_t,a_{\ell}) \gtrsim \sqrt{K\cdot (s_2-s_1)}$, which directly aligns with our criterion \eqref{eq:evict} for evicting $a_{\ell}$.
	To summarize, a bad segment $[s_1,s_2]$ in the sense of \eqref{eq:bad-segment-body} is detectable using a well-timed instance of \swift, which happens often enough with high probability.
	Concretely, we argue that not too many bad segments elapse before $a_{\ell}$ is evicted by a well-timed replay in the above sense and that thus the regret incurred by $a_{\ell}$ is bounded by the RHS of \eqref{eq:regret-decomp}. The details can be found in \Cref{app:bounding-last}.

	\vspace{-2mm}

\section{Conclusion}
We consider the problem of switching dueling bandits where the distribution over preferences can change over time. We study a notion of significant shifts in preferences and ask whether
one can achieve adaptive dynamic regret of $O(\sqrt{K \tilde{L} T})$ where $\tilde{L}$ is the number of significant shifts.
We give a negative result showing that
one cannot achieve such a result
outside of the $\ssti$ setting, and answer this question in the affirmative
under the $\ssti$ setting.
In the future, it would be interesting to consider other
notions of shifts which are weaker than
the notion of significant shift,
and ask whether adaptive algorithms for the Condorcet setting can be designed
with respect to these notions.
\cite{buening22} already give a $O(K\sqrt{ST})$
bound for the Condorcet setting, where $S$ is the number of changes in `best' arm. However, their results have a suboptimal dependence on $K$ due to reduction to ``all-pairs" exploration.
\section*{Acknowledgements}

We thank Samory Kpotufe for helpful discussions. We also acknowledge computing resources from Columbia University's Shared Research Computing Facility project, which is supported by NIH Research Facility Improvement Grant 1G20RR030893-01, and associated funds from the New York State Empire State Development, Division of Science Technology and Innovation (NYSTAR) Contract C090171, both awarded April 15, 2010.

\bibliographystyle{plainnat}
\bibliography{neurips}

\newpage
\appendix
\onecolumn

\section{Experiments}

Code can be found at \ttt{\href{https://github.com/joesuk/nonstationary-duel}{https://github.com/joesuk/nonstationary-duel}}.

\paragraph{Synthetic Environments.} We used a geometric BTL model where the arms are linearly ordered and the $i$-th best arm beats the $j$-th best arm with probability $\mb{P}(i \succ j) = \frac{2^{-i}}{2^{-j} + 2^{-i}}$. At each changepoint, the ordering of arms was randomly permuted, with a total $T=50,000$ rounds with $K=10$ arms. Regret was computed over $N=50$
trials of each environment with standard confidence bands shown. 

\paragraph{Algorithms.} We considered four algorithms: (1) \metaswift (\Cref{meta-alg}, (2) the ANACONDA algorithm of \citet{buening22} (3) Interleaved Filtering (which we abbreviate as IF) as specified by \citet{YueBK+12}, and a baseline (3) RANDDUEL which naively plays a pair of arms selected uniformly at random every round.

\paragraph{Parameters.} Parameters associated with each of the algorithms (e.g., the constants in displays \eqref{eq:evict} and \eqref{eq:switch}, analogous quantities in ANACONDA, and IF's eliminination threshold) were tuned using cross validation on randomly generated geometric BTL environments with number of changepoints varying from $0$ to $1000$. For fairness, all algorithms were given the chance to tune parameters on the same environments before testing.

The first graphic in Figure~\ref{fig:experiments} shows the regret curves in a stationary environment with $S=0$ changes. The second graphic shows the regret curves in a non-stationary environment with $S=4$ changes.
Exact mean and standard deviations on final regret are given in Table~\ref{fig:results}. These do support the theoretical message that METASWIFT performs better than the existing ANACONDA algorithm in non-stationary environments due to more efficient exploration of arms (demonstrated through $\sqrt{K}$ versus $K$
dependence in the theoretical bounds). Moreover, we also observe that the IF algorithm which is designed for stationary environments can have almost linear regret in non-stationary environments.

\begin{figure}
\begin{center}
\includegraphics[width=0.48\linewidth]{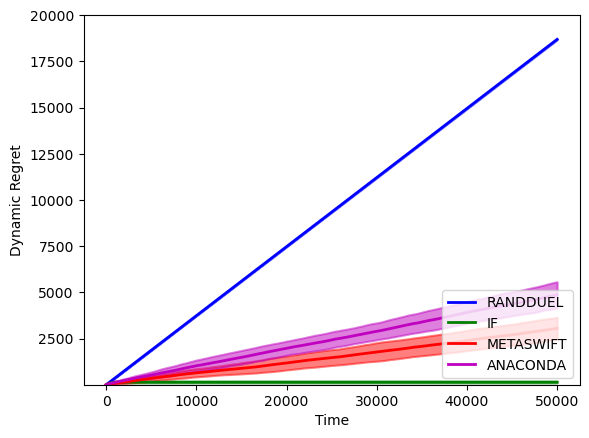}
\includegraphics[width=0.48\linewidth]{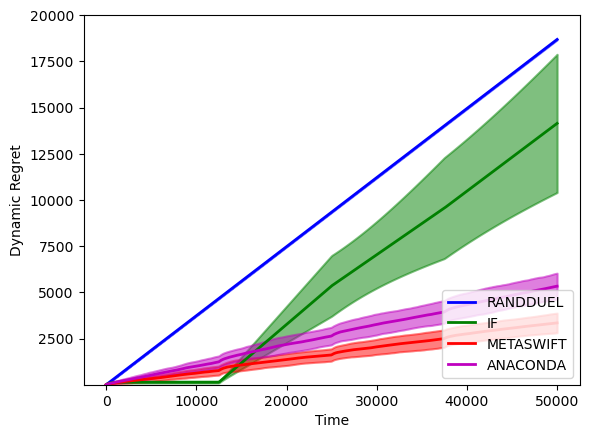}
\caption{Plots of dynamic regret curves over time.}
\label{fig:experiments}
\end{center}
\end{figure}

\begin{table}
\begin{center}
	\begin{tabular}{|c|c|c|c|}
		\hline
		 & Algorithm & Mean Regret & Standard Deviation\\
		\hline
		$S=0$ Changes &	\metaswift & 3048 & 600 \\
			      & ANACONDA & 4863 & 707\\
			      & IF & 140 & 46\\
			      & RANDDUEL & 18688 & 30\\
			      \hline
		$S=4$ Changes & \metaswift & 3346 & 531\\
			      & ANACONDA & 5331 & 705\\
			& IF & 14142 & 3739\\
			& RANDDUEL & 18684 & 23\\
		\hline
	\end{tabular}
	\vspace{1em}
	\caption{Table of total dynamic regrets.}
	\label{fig:results}
\end{center}
\end{table}

\section{Proof of \Cref{thm:lower-bound}}
\label{app:lower-bound-proof}

  Consider the following preference matrices for some $\epsilon>0$ (to be chosen later):
	\begin{align*}
		\bld{P}^+ := \begin{pmatrix}
			1/2 & 1/2+\epsilon & 1\\
			1/2-\epsilon & 1/2 & 1/2+\epsilon\\
			0 & 1/2-\epsilon & 1/2
		\end{pmatrix},
			\bld{P}^- := \begin{pmatrix}
			1/2 & 1/2-\epsilon & 0\\
			1/2+\epsilon & 1/2 & 1/2-\epsilon\\
			1 & 1/2+\epsilon & 1/2
		\end{pmatrix}.
	\end{align*}
	In environment $\bld{P}^+$, arm $1$ is the Condorcet winner and we have $1\succ 2 \succ 3$. In environment $\bld{P}^-$, arm $3$ is the winner with $3\succ 2 \succ 1$.

	Consider a uniform mixture $\mc{U}$ of the preference matrices $\bld{P}^+$ and $\bld{P}^-$,
	Let $\mc{E}$ be a (random) sequence of $T$ environments sampled i.i.d. from $\mc{U}$, with $\bld{P}_t := (\mc{E})_t$ being the sampled environment at round $t$. 

	First, it is straightforward to verify in every such switching dueling bandit $\mc{E}$, arm $2$ does not incur significant regret over any interval of rounds $[s_1,s_2]\subseteq [1,T]$, for $\epsilon < 1/\sqrt{T}$. Thus, every such $\mc{E}$ exhibits zero significant shifts.

    Next, in what follows, we use $\mb{E}_{\mc{E}}[\cdot]$ to denote an expectation over both the randomness of $\mc{U}^{\otimes T}$ and the algorithm's feedback and decisions. If there exists a realization of $\mc{E}$ such that the algorithm gets expected regret at least $T/8$, then we are already done. Otherwise, we have the expected regret over the random environment $\mc{E}$ is bounded above by $T/8$. Next, define the {\em arm-pull counts} $N(T,a) := \sum_{t=1}^T \pmb{1}\{i_t=a\}+\pmb{1}\{j_t=a\}$ for each arm $a$. Then, we relate these arm-pull counts to the regret:
	\begin{align*}
		T/8 &> \sum_{t=1}^T  \mb{E}_{\mc{E}} \left[ \delta_t(i^*,i_t) +\delta_t(i^*,j_t) \right] \\
		     &\geq \frac{1}{2} \sum_{t=1}^T \mb{E}_{\mc{E}}[ (\pmb{1}\{i_t=3\} + \pmb{1}\{j_t=3\}) \cdot \pmb{1}\{(\mc{E})_t=\bld{P}^+\}\\
		     &\qquad + (\pmb{1}\{i_t=1\} + \pmb{1}\{j_t=1\})\cdot \pmb{1}\{(\mc{E})_t=\bld{P}^-\} ]\\
		    &= \frac{1}{2} \sum_{t=1}^T \mb{E}_{\mc{E}}\left[ \frac{1}{2}\cdot \left( \pmb{1}\{i_t=3\} + \pmb{1}\{j_t=3\} + \pmb{1}\{i_t=1\} + \pmb{1}\{j_t=1\} \right) \right]\\
		    &\geq \frac{1}{4}\cdot \mb{E}_{\mc{E}}[N(T,3)+N(T,1)],
	\end{align*}
	where we use the tower law in the third inequality (note that $i_t,j_t$ are independent of $(\mc{E})_t$). Thus, in expectation over both the model noise and randomness of $\mc{E}$, arms $3$ and $1$ cannot be played more than $T/2$ times without causing linear regret.

	Since $\sum_{a=1}^3 \mb{E}_{\mc{E}}[N(T,a)] = 2T$, we conclude that $\mb{E}_{\mc{E}}[ N(T,2) ] \geq 3T/2$. We will next show that arm $2$ is statistically indistinguishable from arm $3$. To do so, we consider an analogous environment which is identical to $\mc{E}$ except the identities of arms $2$ and $3$ are switched. Specifically, let $\mc{E}'$ be a random sequence of $T$ environments sampled i.i.d. from a uniform mixture of $\bld{Q}^+$ and $\bld{Q}^-$, which are respectively $\bld{P}^+$ and $\bld{P}^-$ with switched entries for arms $2$ and $3$.

	We next claim $\mb{E}_{\mc{E}}[N(T,2)] = \mb{E}_{\mc{E}'}[N(T,2)]$. Admitting this claim, it immediately follows that the algorithm has expected regret (over the randomness of $\mc{E}'$) at least (using an analogous chain of inequalities as above):
	\[
		\mb{E}_{\mc{E}'}[\DR(T)] \geq \frac{1}{4}\cdot \mb{E}_{\mc{E}'}[N(T,2)] \geq 3T/8.
	\]
	In particular, there exists a realization of $\mc{E}'$ within the prior on environments on which the regret is at least $3T/8$.

%
	It remains to show $\mb{E}_{\mc{E}}[N(T,2)] = \mb{E}_{\mc{E}'}[N(T,2)]$. This will follow from Pinsker's inequality and showing that the KL divergence between $\mc{E}$ and $\mc{E}'$ is zero.

	We first observe that the dueling observations $O_t(i,j)$ at each round $t\in[T]$ are identically a $\Ber(1/2)$ R.V. for all pairs of arms $i,j$ in both $\mc{E}$ and $\mc{E}'$, since a uniform mixture of a $\Ber(1/2+\epsilon)$ and a $\Ber(1/2-\epsilon)$ is a $\Ber(1/2)$, while so is the uniform mixture of a $\Ber(1)$ and a $\Ber(0)$.

	Then, since $N(T,2)\leq 2T$, by Pinsker's inequality \citep[see][proof of Lemma C.1]{SahaNDB}, we have:
	\[
		\mb{E}_{\mc{E}}[N(T,2)] - \mb{E}_{\mc{E}'}[N(T,2)] \leq 2T\sqrt{\frac{\KL(\mc{P},\mc{P}')}{2}},
	\]
	where $\mc{P}$ and $\mc{P}'$ are the induced distributions over the randomness $\mc{U}^{\otimes T}$, and the history of observations and decisions in $T$ rounds by $\mc{E}$ and $\mc{E}'$. Let $\mc{H}_t$ be the history of randomness, observations, and decisions till round $t$: $\mc{H}_t=\{(u_s,i_s,j_s,O_s(i_s,j_s))\}_{s\leq t}$ where $u_s\sim \Ber(1/2)$ decides whether $\bld{P}^+/\bld{Q}^+$ or $\bld{P}^-/\bld{Q}^-$ is realized at round $t$. Let $\mc{P}_t$ and $\mc{P}_t'$ denote the respective marginal distributions over the round $t$ data $(u_t,i_t,j_t,O_t(i_t,j_t))$. Then, repeatedly using chain rule for KL and then conditioning on the played arms $(i_t,j_t)$ (whose identities are fixed given $\mc{H}_{t-1}$) at round $t$, we get:
	\begin{align*}
		\KL(\mc{P},\mc{P}') &= \sum_{t=1}^T \KL(\mc{P}_t | \mc{H}_{t-1},\mc{P}_t' | \mc{H}_{t-1})
		= \sum_{t=1}^T \mb{E}_{\mc{H}_{t-1}} [ \mb{E}_{i_t,j_t}[ \KL(\Ber(1/2),\Ber(1/2)) ] ] = 0.
	\end{align*}
 $\hfill\blacksquare$

 \begin{remark}\label{rmk:sst-not-sti}
	The constructed environments $\bld{P}^+,\bld{P}^-$ in the proof of \Cref{thm:lower-bound} satisfies SST but violates STI. A similar construction which violates SST (but satisfies STI) can also be used in the proof. Let
	\[
		\bld{P}^+ := \begin{pmatrix}
			1/2 & 1/2-\epsilon & 1/2-\epsilon\\
			1/2+\epsilon & 1/2 & 0\\
			1/2+\epsilon & 1 & 1/2
		\end{pmatrix},
	\]
	and let $\bld{P}^-$ be the same preference matrix with arms $2$ and $3$ switched. Note that $3\succ 2\succ 1$ in $\bld{P}^+$ and $2\succ 3\succ 1$ in $\bld{P}^-$. Here, arm $1$ is the ``safe'' arm as it always has a gap of $\epsilon$ while arms $2$ and $3$ randomly alternate between being the best arm and the worst arm with a gap of $1/2$. Thus, {\em both} the STI and SST assumptions are required to get sublinear regret in mildly adversarial environments.
\end{remark}

Due to these observations we have the following corollaries.

\begin{corollary}
    \label{cor:lower-bound}
	For each horizon $T$, there exists a finite family $\mc{F}$ of switching dueling bandit environments with $K=3$ that satisfies the SST condition  with $\Lsig=0$ significant shifts. The worst-case regret of any algorithm on an environment $\mc{E}$ in this family is lower bounded as
	\[
		\sup_{\mc{E}\in\mc{F}} \mb{E}_{\mc{E}} \left[ \DR(T) \right] \geq T/8.
	\]
\end{corollary}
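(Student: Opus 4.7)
The plan is to observe that the construction used in the proof of \Cref{thm:lower-bound} already yields the corollary verbatim once one verifies SST for the preference matrices in question. Concretely, I would first check SST explicitly: in $\bld{P}^+$ with ordering $1\succ 2 \succ 3$, the only nontrivial inequality is $\delta^+(1,3)=1/2\geq \epsilon = \max\{\delta^+(1,2),\delta^+(2,3)\}$, which holds; symmetrically, $\bld{P}^-$ satisfies SST under the reversed ordering $3\succ 2\succ 1$. Because \Cref{assumption:sst+sti} allows the total ordering $\succ_t$ to vary with $t$, every matrix sampled from the uniform mixture of $\bld{P}^+$ and $\bld{P}^-$ is SST, so every realization of the random sequence $\mc{E}$ lies in the SST class. (Note STI is violated, as $\delta^+(1,3)=1/2 > 2\epsilon = \delta^+(1,2)+\delta^+(2,3)$, which is consistent with the positive result of \Cref{thm:metaswift} under the joint $\ssti$ condition.)

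Next I would note that the verification $\Lsig = 0$ is unaffected: for $\epsilon \ll 1/\sqrt{T}$, arm $2$ has instantaneous gap $\epsilon$ to the Condorcet winner at every round regardless of which of $\bld{P}^+,\bld{P}^-$ is realized, so for every subinterval $[s_1,s_2]\subseteq [1,T]$ we have $\sum_{s=s_1}^{s_2}\delta_s(a_s^*,2)\leq \epsilon T < \sqrt{K(s_2-s_1)}$, and hence arm $2$ remains a safe arm for the entire horizon on every realization.

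The remainder of the argument carries through unmodified from the proof of \Cref{thm:lower-bound}: reduce the claim $\mb{E}_{\mc{E}}[\DR(T)]<T/8$ to a bound on $\mb{E}_{\mc{E}}[N(T,1)+N(T,3)]$, introduce the symmetric environment $\mc{E}'$ obtained by swapping arms $2$ and $3$ in both matrices (which is also SST by the same check), and apply Pinsker's inequality together with the chain rule for KL. The key observation is that the uniform mixture of $\Ber(1/2+\epsilon)$ and $\Ber(1/2-\epsilon)$, and likewise the uniform mixture of $\Ber(0)$ and $\Ber(1)$, both equal $\Ber(1/2)$; therefore every pairwise observation is marginally $\Ber(1/2)$ in both $\mc{E}$ and $\mc{E}'$, giving $\KL(\mc{P},\mc{P}')=0$ and hence $\mb{E}_{\mc{E}}[N(T,2)] = \mb{E}_{\mc{E}'}[N(T,2)]$. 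Combining this identity with the contrapositive regret bound on $\mc{E}'$ produces a realization with dynamic regret at least $3T/8 > T/8$.

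There is no substantive obstacle: the corollary is essentially a bookkeeping observation, documenting that the SST side of \Cref{rmk:sst-not-sti} is precisely what the original construction established, and that consequently the same finite family $\mc{F}$ witnesses the lower bound under the \emph{SST} hypothesis alone.
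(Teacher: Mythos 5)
Your proposal is correct and follows essentially the same route as the paper: the paper obtains this corollary directly from the construction in the proof of \Cref{thm:lower-bound}, observing (as recorded in \Cref{rmk:sst-not-sti}) that $\bld{P}^+,\bld{P}^-$ satisfy SST while violating STI, so the identical mixture argument, arm-swap, and Pinsker/KL computation give the bound. Your explicit SST check and the remark that the swapped environments $\bld{Q}^\pm$ remain SST are exactly the bookkeeping the paper leaves implicit (just make sure $\epsilon$ is taken small enough, e.g.\ $\epsilon\leq 1/T$, so that the chain $\epsilon T<\sqrt{K(s_2-s_1)}$ holds for every subinterval).
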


\begin{corollary}
    \label{cor:lower-bound}
	For each horizon $T$, there exists a finite family $\mc{F}$ of switching dueling bandit environments with $K=3$ that satisfies the STI condition  with $\Lsig=0$ significant shifts. The worst-case regret of any algorithm on an environment $\mc{E}$ in this family is lower bounded as
	\[
		\sup_{\mc{E}\in\mc{F}} \mb{E}_{\mc{E}} \left[ \DR(T) \right] \geq T/8.
	\]
\end{corollary}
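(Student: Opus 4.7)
The plan is to mirror the proof of \Cref{thm:lower-bound} almost verbatim after substituting the SST-only hard matrices used there with the STI-only (SST-violating) pair from \Cref{rmk:sst-not-sti}. I would specifically take the Remark's construction at $\epsilon = 0$,
\[
\bld{P}^+ = \begin{pmatrix} 1/2 & 1/2 & 1/2 \\ 1/2 & 1/2 & 0 \\ 1/2 & 1 & 1/2 \end{pmatrix}, \qquad \bld{P}^- = \begin{pmatrix} 1/2 & 1/2 & 1/2 \\ 1/2 & 1/2 & 1 \\ 1/2 & 0 & 1/2 \end{pmatrix},
\]
because this is precisely what makes the uniform mixture yield every pairwise observation as $\Ber(1/2)$ and thereby preserve the exact KL-based indistinguishability step. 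Define $\bld{Q}^\pm$ as the analogues obtained by swapping arms $1$ and $2$ in $\bld{P}^\pm$, and take $\mc{F}$ to consist of all deterministic switching environments whose per-round preference matrix lies in $\{\bld{P}^+, \bld{P}^-, \bld{Q}^+, \bld{Q}^-\}$.

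I would first verify STI with the orderings $3 \succ 2 \succ 1$ in $\bld{P}^+$ and $2 \succ 3 \succ 1$ in $\bld{P}^-$: the only nontrivial triple in $\bld{P}^+$ satisfies $\delta(3,1) = 0 \leq 1/2 = \delta(3,2) + \delta(2,1)$, while SST fails since $\delta(3,1) = 0 < 1/2 = \delta(3,2)$. Choosing $a^*_t$ to be the strong Condorcet winner of each matrix (arm $3$ in $\bld{P}^+$, arm $2$ in $\bld{P}^-$)---permitted under \Cref{assumption:winner}, which allows non-unique winners---arm $1$ has $\delta_t(a^*_t, 1) = 0$ at every round, so $\Lsig = 0$ deterministically in every environment of $\mc{F}$. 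Letting $\mc{E}$ be the random environment sampling $\bld{P}_t$ uniformly from $\{\bld{P}^+, \bld{P}^-\}$ i.i.d.\ and $\mc{E}'$ its analogue over $\{\bld{Q}^+, \bld{Q}^-\}$, a one-line check shows every pair observation is marginally $\Ber(1/2)$ under both priors ($O(1,2)$ and $O(1,3)$ are already $\Ber(1/2)$ in each matrix of $\mc{E}$ while $O(2,3)$ mixes between $\Ber(0)$ and $\Ber(1)$; the roles of $(1,3)$ and $(2,3)$ swap for $\mc{E}'$). The chain-rule KL argument from the Theorem then gives $\KL(\mc{P}, \mc{P}') = 0$, so any algorithm's decisions induce $\mb{E}_{\mc{E}}[N(T,a)] = \mb{E}_{\mc{E}'}[N(T,a)]$ for all arms $a$.

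The finishing Le Cam step then runs as in the Theorem. Per-pull expected regrets, computed using independence of $\bld{P}_t$ from $(i_t, j_t)$, are $0, 1/4, 1/4$ for arms $1, 2, 3$ in $\mc{E}$, giving $\mb{E}_{\mc{E}}[\DR(T)] = \frac{1}{8}\mb{E}_{\mc{E}}[N(T,2) + N(T,3)]$ and by the symmetric calculation $\mb{E}_{\mc{E}'}[\DR(T)] \geq \frac{1}{8}\mb{E}_{\mc{E}'}[N(T,1)]$. Supposing $\sup_{\mc{E} \in \mc{F}} \mb{E}[\DR(T)] < T/8$ forces $\mb{E}_{\mc{E}}[N(T,1)] > T$, hence by indistinguishability $\mb{E}_{\mc{E}'}[N(T,1)] > T$, hence $\mb{E}_{\mc{E}'}[\DR(T)] > T/8$---a contradiction, and some realization within $\mc{F}$ must achieve regret at least $T/8$. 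The main subtlety I expect is justifying the $\epsilon = 0$ choice: any $\epsilon > 0$ perturbs the marginals on pairs $(1,2)$ and $(1,3)$ unequally between $\mc{E}$ and $\mc{E}'$, giving only a weakened $T/8 - O(\epsilon T)$ bound through Pinsker; taking $\epsilon = 0$ requires committing to a non-unique Condorcet winner but delivers the exact $T/8$ lower bound claimed.
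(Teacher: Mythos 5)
Your overall route is the same as the paper's: take the SST-violating, STI-satisfying pair from \Cref{rmk:sst-not-sti}, form the uniform i.i.d.\ mixture, and rerun the arm-pull-counting and KL/Pinsker argument of \Cref{thm:lower-bound} with the roles of the safe and unsafe arms permuted. However, your specific choice $\epsilon=0$ creates a genuine problem. At $\epsilon=0$, arm $1$ ties every other arm in both $\bld{P}^+$ and $\bld{P}^-$, so arm $1$ is \emph{itself} a Condorcet winner of every matrix in $\mc{E}$. The dynamic regret is defined relative to $a_t^*$, and the paper explicitly relies on the benchmark being the unique best arm for $\DR$ to be well-defined; with ties, the benchmark is ambiguous, and under the equally legitimate designation $a_t^*=1$ one gets $\delta_t(a_t^*,a)=0$ for all $a$, hence $\DR(T)\equiv 0$ for every algorithm. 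A lower bound that holds only when the environment is allowed to select the tie-broken benchmark most adversarial to the learner ($a_t^*=3$ in $\bld{P}^+$, $a_t^*=2$ in $\bld{P}^-$) does not establish the corollary as stated. You flag this trade-off yourself in your final paragraph but resolve it the wrong way.

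The fix is the paper's: keep $\epsilon\in(0,1/\sqrt{T})$ as in \Cref{rmk:sst-not-sti}. Then the Condorcet winner and the ordering are strict and unique, arm $1$ accumulates total gap $\epsilon T<\sqrt{KT}$ so $\Lsig=0$ still holds, and the price is exactly the nonzero KL you compute: the marginals on pairs $(1,2)$, $(1,3)$, $(2,3)$ differ by $O(\epsilon)$ between $\mc{E}$ and $\mc{E}'$, giving $\KL(\mc{P},\mc{P}')=O(T\epsilon^2)$ and a Pinsker slack of $O(T^{3/2}\epsilon)=o(T)$, which is absorbed since the contradiction step produces a strict surplus over $T/8$ (as in the theorem, where the alternative environment yields $3T/8$). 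One further small repair: your family $\mc{F}$ of \emph{all} sequences over $\{\bld{P}^+,\bld{P}^-,\bld{Q}^+,\bld{Q}^-\}$ is too large --- a sequence mixing the $\bld{P}$'s and $\bld{Q}$'s can make every arm incur significant regret on some subinterval, violating $\Lsig=0$. Restrict $\mc{F}$ to the realizations of $\mc{E}$ (sequences over $\{\bld{P}^+,\bld{P}^-\}$, where arm $1$ stays safe) together with those of $\mc{E}'$ (where arm $2$ stays safe).
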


    \section{Full Proof of \Cref{thm:metaswift}}
    \label{sec:missing-thm}

	Throughout the proof $c_1,c_2,\ldots$ will denote positive constants not depending on $T$ or any distributional parameters. First, we observe the regret bound is vacuous for $T<K$; so, assume $T\geq K$. Recall from \Cref{line:ep-start} of Algorithm~\ref{meta-alg} that $t_{\ell}$ is the first round of the $\ell$-th episode. WLOG, there are $T$ total episodes and, by convention, we let $t_{\ell} := T+1$ if only $\ell-1$ episodes occurred by round $T$. 

	Next, we establish an elementary lemma which will help us leverage the STI and SST assumptions.

	\subsection{Decomposing the Regret}

\begin{lemma}\label{lem:upper-triangle}
	For any three arms $b,c$, under $\ssti$: $\delta_t(a_t^*,c) \leq 2\cdot \delta_t(a_t^*,b)+\delta_t(b,c)$.
\end{lemma}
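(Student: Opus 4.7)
The plan is to do a case analysis on the position of the arms $b$ and $c$ in the round-$t$ total ordering $\succ_t$, with $a:=a_t^*$ always occupying the top position (as noted right after \Cref{assumption:sst+sti}). Since $a_t^*$ is the Condorcet winner, we have $\delta_t(a,b) \geq 0$ and $\delta_t(a,c) \geq 0$ for every $b,c$. The two non-trivial cases are whether $a \succeq_t b \succeq_t c$ or $a \succeq_t c \succ_t b$; ties and the degenerate cases $b=a$ or $c=a$ are handled identically (and are essentially free).

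In the first case, $a \succeq_t b \succeq_t c$, I would directly invoke STI (\Cref{assumption:sst+sti}(b)) to get $\delta_t(a,c) \leq \delta_t(a,b) + \delta_t(b,c)$, and then absorb the extra factor of $2$ using $\delta_t(a,b) \geq 0$. This case is essentially just STI with slack.

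In the second case, $a \succeq_t c \succ_t b$, STI goes the wrong way, so I would use SST (\Cref{assumption:sst+sti}(a)) twice. Applied to the triple $a \succ c \succ b$, it yields both $\delta_t(a,b) \geq \delta_t(a,c)$ and $\delta_t(a,b) \geq \delta_t(c,b) = -\delta_t(b,c)$, where the equality uses $\delta_t(c,b)+\delta_t(b,c)=0$ (i.e., $P_t(c,b)+P_t(b,c)=1$). Rearranging the second inequality gives $\delta_t(a,b)+\delta_t(b,c) \geq 0$, so
\[
  2\delta_t(a,b) + \delta_t(b,c) \;=\; \delta_t(a,b) + \bigl(\delta_t(a,b)+\delta_t(b,c)\bigr) \;\geq\; \delta_t(a,b) \;\geq\; \delta_t(a,c),
\]
as desired.

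The only subtlety is the sign of $\delta_t(b,c)$: when $c \succ_t b$ it can be negative, which is precisely why the naive STI step does not suffice in the second case and why the factor of $2$ on $\delta_t(a,b)$ is needed to absorb the potentially negative $\delta_t(b,c)$. This is the one spot that requires a bit of care; everything else is a one-line application of the assumptions. The lemma itself is a purely pointwise algebraic fact at round $t$, so no probabilistic machinery is involved.
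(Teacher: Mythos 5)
Your proof is correct and follows essentially the same route as the paper's: case $b\succeq_t c$ handled by STI plus the nonnegativity of $\delta_t(a_t^*,b)$, and case $c\succ_t b$ handled by applying SST twice to get $\delta_t(a_t^*,c)\leq\delta_t(a_t^*,b)$ and $\delta_t(a_t^*,b)+\delta_t(b,c)\geq 0$. Nothing is missing.
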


\begin{proof}
	If $b\succeq_t c$, this is true by STI. Otherwise, $\delta_t(a_t^*,c) \leq \delta_t(a_t^*,b) \leq \delta_t(a_t^*,b) + \delta_t(a_t^*,b)-\delta_t(c,b)$ by SST.
\end{proof}

    Using \Cref{lem:upper-triangle} twice, we have the regret can be written as
    \[
		\sum_{t=1}^T \delta_t(a_t^*,\hat{a}_t) + \delta_t(a_t^*,a_t) \leq
		\sum_{t=1}^T 6\cdot \delta_t(a_t^*,\atsharp) + 3\cdot \delta_t(\atsharp,\hat{a}_t) + \delta_t(\hat{a}_t,a_t).
\]
Following the discussion of \Cref{sec:regret-analysis}, it remains to bound $\sum_{t=1}^T \delta_t(\atsharp,\hat{a}_t)$ and $\sum_{t=1}^T \delta_t(\hat{a}_t,a_t)$ in expectation. For this, we need to relate our estimators $\hat{\delta}_t(\hat{a}_t,a)$ to the true gaps $\delta_t(\hat{a}_t,a)$.

 \subsection{Relating Estimated Gaps to Regret}

    We first recall a version of Freedman's martingale concentration inequality, identical to the one used in \citet{Suk22,buening22}.

	\begin{lemma}[Theorem 1 of \citet{alina+11}]\label{lem:martingale-concentration}
		Let $X_1,\ldots,X_n\in\mb{R}$ be a martingale difference sequence with respect to some filtration $\{\mc{F}_0,\mc{F}_1,\ldots\}$. Assume for all $t$ that $X_t\leq R$ a.s. and that $\sum_{i=1}^n \mb{E}[X_i^2|\mc{F}_{i-1}] \leq V_n$ a.s. for some constant  $V_n$ only depending on $n$.
		Then for any $\delta\in (0,1)$, with probability at least $1-\delta$, we have:
		\[
			\sum_{i=1}^n X_i \leq (e-1)\left(\sqrt{V_n\log(1/\delta)} + R\log(1/\delta) \right).
		\]
	\end{lemma}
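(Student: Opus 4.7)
The statement is a restatement of Freedman's martingale concentration inequality (as specialized by Beygelzimer--Langford--Li--Reyzin--Schapire), so the plan is to follow the standard exponential-supermartingale route and then optimize the free parameter. I would not invent anything new here; the goal is to reproduce the argument and land on the specific closed form with the constant $(e-1)$.

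The first step is a pointwise inequality for the moment generating function of a bounded random variable. Define $\psi(u) := (e^u - 1 - u)/u^2$, which is increasing on $(0,\infty)$ and satisfies $\psi(u) \leq e - 1$ for $u \leq 1$. Using $X_t \leq R$ a.s., one checks that $e^{\lambda x} \leq 1 + \lambda x + \psi(\lambda R)\,\lambda^2 x^2$ for all $x \leq R$ and all $\lambda \geq 0$ with $\lambda R \leq 1$. Taking conditional expectations and using the martingale difference property $\mb{E}[X_i \mid \mc{F}_{i-1}] = 0$ together with $1 + y \leq e^y$, this yields
\[
  \mb{E}\bigl[\exp(\lambda X_i) \mid \mc{F}_{i-1}\bigr]
  \leq \exp\bigl( \psi(\lambda R)\,\lambda^2\, \mb{E}[X_i^2 \mid \mc{F}_{i-1}] \bigr).
\]

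The second step is to construct the supermartingale. Let $S_t := \sum_{i \leq t} X_i$ and $W_t := \sum_{i \leq t} \mb{E}[X_i^2 \mid \mc{F}_{i-1}]$, and set $Z_t := \exp\bigl(\lambda S_t - \psi(\lambda R)\,\lambda^2 W_t\bigr)$. The bound from the first step shows $\mb{E}[Z_t \mid \mc{F}_{t-1}] \leq Z_{t-1}$, so $Z_t$ is a nonnegative supermartingale with $\mb{E}[Z_n] \leq 1$. Markov's inequality then gives, with probability at least $1-\delta$,
\[
  S_n \leq \psi(\lambda R)\,\lambda\, W_n + \frac{\log(1/\delta)}{\lambda}.
\]
Since $W_n \leq V_n$ a.s. by hypothesis, we may replace $W_n$ by $V_n$ in this bound.

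The third step is to optimize $\lambda \in (0, 1/R]$. Using $\psi(\lambda R) \leq e - 1$ on this range, the right side is at most $(e-1)\lambda V_n + \log(1/\delta)/\lambda$. Choosing $\lambda = \min\{1/R,\ \sqrt{\log(1/\delta)/V_n}\}$ (the unconstrained minimizer of the two-term expression, truncated at the admissibility boundary $1/R$) gives the stated bound $(e-1)\bigl(\sqrt{V_n \log(1/\delta)} + R \log(1/\delta)\bigr)$: the $\sqrt{V_n \log(1/\delta)}$ term arises in the interior-optimum regime, and the $R\log(1/\delta)$ term takes over at the boundary $\lambda = 1/R$.

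The only delicate point is the clean constant $(e-1)$, which is why we keep the crude bound $\psi(u) \leq e-1$ on $u \leq 1$ rather than expanding further; sharper constants would force case analysis on which of the two regimes is active. There is no real obstacle otherwise, since the variance bound is assumed a.s.\ and the martingale is not stopped, so one directly applies Markov's inequality to $Z_n$ rather than invoking the optional stopping theorem.
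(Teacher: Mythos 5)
The paper does not prove this lemma at all --- it is imported verbatim as Theorem~1 of the cited reference \citep{alina+11} --- so there is no in-paper argument to compare against. Your reconstruction follows the standard exponential-supermartingale proof of Freedman's inequality (pointwise MGF bound for variables bounded above by $R$, the supermartingale $Z_t = \exp(\lambda S_t - \psi(\lambda R)\lambda^2 W_t)$, Markov, then a choice of $\lambda$), which is exactly how the cited result is obtained, and the structure is sound: the martingale-difference property, the a.s.\ deterministic bound $W_n \le V_n$, and the fact that $V_n$ is non-random (so the plug-in $\lambda$ is legitimate) are all used correctly.

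There is, however, one quantitative slip that prevents your argument from delivering the stated constant. Since $\psi(u) = \sum_{k\ge 2} u^{k-2}/k!$ is increasing, the sharp bound on $(0,1]$ is $\psi(u) \le \psi(1) = e-2$, not $e-1$. Your looser bound $\psi \le e-1$ is true but, after plugging in $\lambda = \sqrt{\log(1/\delta)/V_n}$ (which, note, is not the unconstrained minimizer of $(e-1)\lambda V_n + \log(1/\delta)/\lambda$, merely a convenient choice), the interior regime gives $(e-1)\sqrt{V_n\log(1/\delta)} + \sqrt{V_n\log(1/\delta)} = e\sqrt{V_n\log(1/\delta)}$, and the boundary regime $\lambda = 1/R$ similarly gives $e\,R\log(1/\delta)$; neither is dominated by the claimed $(e-1)(\cdot)$ in general (e.g.\ when $R$ is tiny). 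With the correct constant $e-2$ the same two cases give $(e-2+1)\sqrt{V_n\log(1/\delta)} = (e-1)\sqrt{V_n\log(1/\delta)}$ and $(e-1)R\log(1/\delta)$ respectively, each bounded by $(e-1)\bigl(\sqrt{V_n\log(1/\delta)} + R\log(1/\delta)\bigr)$ as claimed. So the fix is a one-character change, and in the context of this paper (where all constants are absorbed into $c_1, c_2, \dots$) the slip is immaterial, but as a proof of the lemma as stated it does not close.
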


	We next apply \Cref{lem:martingale-concentration} to bound the estimation error of our estimates $\hat{\delta}_t(\hat{a}_t,a)$, found in \eqref{eq:gap-estimate}.

	\begin{proposition}\label{prop:concentration}
    Let $\mc{E}_1$ be the event that for all rounds $s_1<s_2$ and all arms $a\in [K]$:
	\begin{align}\label{eq:error-bound}
		\left| \sum_{t=s_1}^{s_2} \hat{\delta}_t(\hat{a}_t,a) - \right. & \left. \sum_{t=s_1}^{s_2} \mb{E}\left[\hat{\delta}_t(\hat{a}_t,a)\mid \mc{F}_{t-1}\right] \right|
		\leq c_1\log(T) \left( \sqrt{K(s_2-s_1)} + K\right),
	\end{align}
    for an appropriately large constant $c_1$, and where $\mc{F} := \{\mc{F}_t\}_{t=1}^T$ is the canonical filtration generated by observations and randomness of elapsed rounds. Then, $\mc{E}_1$ occurs with probability at least $1-1/T^2$. 
	\end{proposition}

	\begin{proof}
		The random variable $\hat{\delta}_t(\hat{a}_t,a) - \mb{E}[ \hat{\delta}_t(\hat{a}_t,a) | \mc{F}_{t-1}]$ is a martingale difference bounded above by $K$ for all rounds $t$ and all arms $a,a'$. Note here that the identity of the candidate arm $\hat{a}_t$ is fixed conditional on the observations of the previous rounds $\mc{F}_{t-1}$. The variance of this difference is:
	\begin{align*}
		\sum_{t=s_1}^{s_2}	\mb{E}[\hat{\delta}_t^2(\hat{a}_t,a) \mid \mc{F}_{t-1}] 
													       &\leq \sum_{t=s_1}^{s_2} |\mc{A}_t|^2 \mb{E} [\pmb{1}\{ j_t=a\} | \mc{F}_{t-1} ]\\
												  &\leq \sum_{t=s_1}^{s_2} |\mc{A}_t|^2 \cdot \frac{1}{|\mc{A}_t|}\\
												  &\leq K\cdot(s_2-s_1+1).\\
												  &\leq 2K\cdot (s_2-s_1)
	\end{align*}
	Then, the result follows from \Cref{lem:martingale-concentration} and taking union bounds over arms $a$ and rounds $s_1,s_2$.
	\end{proof}

%
%

	Since the contribution to the expected regret is small outside of the high-probability good event $\mc{E}_1$, going forward we will assume as necessary that \eqref{eq:error-bound} holds for all arms $a\in[K]$ and rounds $s_1,s_2$. The next result asserts that episodes roughly correspond to significant shifts in the sense that a restart (\Cref{line:restart} of \Cref{meta-alg}) occurs only if a significant shift has been detected.

	\begin{lemma}
		\label{lem:counting-eps}
	On event $\mc{E}_1$, for each episode $[t_{\ell},t_{\ell+1})$ with $t_{\ell+1}\leq T$ (i.e., an episode which concludes with a restart), there exists a significant shift $\tau_i\in [t_{\ell},t_{\ell+1})$.
	\end{lemma}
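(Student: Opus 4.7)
The plan is to argue by contradiction. Suppose no significant shift lies in $[t_{\ell},t_{\ell+1})$, so that the whole episode sits inside a single significant phase $[\tau_i,\tau_{i+1})$ with $\tau_i \le t_{\ell}$ and $\tau_{i+1} \ge t_{\ell+1}$. Since an episode terminates only through the restart criterion $\Amaster=\emptyset$, every arm $a\in[K]$ must have been removed from $\Amaster$ during the episode. By the eviction rule \eqref{eq:evict}, for each such $a$ there exist rounds $[s_1^a,s_2^a]\subseteq [t_{\ell},t_{\ell+1})$ with
\[
\sum_{s=s_1^a}^{s_2^a} \hat{\delta}_s(\hat{a}_s,a) \geq C\log(T)\sqrt{K(s_2^a-s_1^a)\vee K^2}.
\]

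First I would turn this estimator bound into a bound on true gaps. A direct conditional computation (using that $a_s$ is drawn uniformly from $\mc{A}_s$ given $\mc{F}_{s-1}$) yields $\mb{E}[\hat{\delta}_s(\hat{a}_s,a)\mid \mc{F}_{s-1}]=\delta_s(\hat{a}_s,a)$ whenever $a\in \mc{A}_s$, and equals $-1/2\le \delta_s(\hat{a}_s,a)$ otherwise, so replacing the conditional mean by $\delta_s(\hat{a}_s,a)$ only weakens the sum. On the event $\mc{E}_1$ from \Cref{prop:concentration}, choosing the constant $C$ in \eqref{eq:evict} strictly larger than $c_1$ ensures that the threshold $C\log(T)\sqrt{K(s_2^a-s_1^a)\vee K^2}$ absorbs the concentration slack $c_1\log(T)(\sqrt{K(s_2^a-s_1^a)}+K)$, leaving
\[
\sum_{s=s_1^a}^{s_2^a} \delta_s(\hat{a}_s,a) \geq \sqrt{K(s_2^a-s_1^a)}.
\]

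Second, I would lift the gap from the candidate arm $\hat{a}_s$ to the Condorcet winner $a_s^*$ using SST. Because $a_s^*$ is always the top of the ordering $\succ_s$, two cases arise: if $\hat{a}_s\succeq_s a$, then SST applied to $a_s^*\succeq_s \hat{a}_s\succeq_s a$ yields $\delta_s(a_s^*,a)\ge \delta_s(\hat{a}_s,a)$; if instead $a\succ_s \hat{a}_s$, then $\delta_s(\hat{a}_s,a)\le 0\le \delta_s(a_s^*,a)$. Summing the pointwise inequality $\delta_s(a_s^*,a)\ge \delta_s(\hat{a}_s,a)$ over $[s_1^a,s_2^a]$ gives $\sum_{s=s_1^a}^{s_2^a}\delta_s(a_s^*,a)\ge \sqrt{K(s_2^a-s_1^a)}$, i.e., arm $a$ has significant regret on $[s_1^a,s_2^a]$ in the sense of \Cref{definition:sig-shift}.

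Since this holds for \emph{every} $a\in[K]$ with $[s_1^a,s_2^a]\subseteq [\tau_i,t_{\ell+1}-1]$, the minimality clause in the recursive definition of $\tau_{i+1}$ forces $\tau_{i+1}\le t_{\ell+1}-1$, contradicting $\tau_{i+1}\ge t_{\ell+1}$. Hence some $\tau_{i+1}\in [t_{\ell},t_{\ell+1})$ exists. The main obstacle is the SST lifting step: because both $\hat{a}_s$ and the ordering $\succ_s$ may move arbitrarily inside the phase, one must verify $\delta_s(a_s^*,a)\ge \delta_s(\hat{a}_s,a)$ pointwise regardless of where $\hat{a}_s$ sits in $\succ_s$, and the two-case split above handles this cleanly using the fact that $a_s^*$ tops $\succ_s$.
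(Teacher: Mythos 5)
Your proposal is correct and follows essentially the same route as the paper's proof: eviction from $\Amaster$ for every arm, concentration on $\mc{E}_1$ to convert the estimator sums into true gap sums (handling the $a\notin\mc{A}_s$ case via $-1/2\le\delta_s(\hat{a}_s,a)$), an SST lift from $\delta_s(\hat{a}_s,a)$ to $\delta_s(a_s^*,a)$, and the conclusion that every arm incurs significant regret within the episode. Your explicit two-case justification of the SST step and the contradiction framing via the minimality of $\tau_{i+1}$ are just slightly more detailed versions of what the paper states directly.
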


	\begin{proof}
		We have that
  \[
     \mb{E}[\hat{\delta}_t(\hat{a}_t,a)|\mc{F}_{t-1}] = \begin{cases}
     \delta_t(\hat{a}_t,a) & a\in\mc{A}_t\\
     -1/2 & a\not\in\mc{A}_t
    \end{cases}.
    \]
    Thus, by concentration (\Cref{prop:concentration}) and the eviction criteria \eqref{eq:evict} with large enough constant $C>0$, we have that an arm $a$ being evicted over interval $[s_1,s_2]$ implies $\sum_{t=s_1}^{s_2} \delta_t(\hat{a}_t,a) > \sqrt{K\cdot (s_2-s_1)}$. By the SST condition, this means that
		\[
			\sum_{t=s_1}^{s_2} \delta_t(a_t^*,a) \geq \sum_{t=s_1}^{s_2} \delta_t(\hat{a}_t,a) > \sqrt{K\cdot (s_2-s_1)}.
		\]
		This means, over the course of episode $[t_{\ell},t_{\ell+1})$, every arm $a\in[K]$ incurs significant regret meaning a significant shift must take place between rounds $t_{\ell}$ and $t_{\ell+1}-1$.
	\end{proof}

	Following the outline of \Cref{sec:regret-analysis}, we now turn our attention to bounding the regrets $\delta_t(\atsharp,\hat{a}_t)$ and $\delta_t(\hat{a}_t,a_t)$ over a single episode $[t_{\ell},t_{\ell+1})$.

	\subsection{Bounding $\mb{E}[\sum_{t=t_{\ell}}^{t_{\ell+1}-1} \delta_t(\hat{a}_t,a_t)]$: Regret of Active Arms to Candidate Arm}
 \label{app:active-candidate}
	We first decompose the total sum of regrets $\mb{E}[\sum_{t=1}^T \delta_t(\hat{a}_t,a_t)]$ based on which arm $a_t$ chooses within the active set $\mc{A}_t$.
	Using tower law, we have
	\begin{align*}
		\mb{E}\left[ \sum_{t=1}^T \delta_t(\hat{a}_t,a_t) \right] &= \sum_{t=1}^T \mb{E}[\mb{E}[ \delta_t(\hat{a}_t,a_t) \mid \mc{F}_{t-1}] ]
									  = \mb{E}\left[ \sum_{t=1}^T  \sum_{a\in\mc{A}_t} \frac{\delta_t(\hat{a}_t,a)}{|\mc{A}_t|} \right].
	\end{align*}
	Splitting the above RHS back along episodes, we obtain the sum $\mb{E}[\sum_{t=t_{\ell}}^{t_{\ell+1}-1} \sum_{a\in\mc{A}_t} \delta_t(\hat{a}_t,a)/|\mc{A}_t|]$.

	Next, we condition on the good event $\mc{E}_1$ on which the concentration bounds of \Cref{prop:concentration} hold. Additionally, we divide up the rounds $t$ into those before arm $a$ is evicted from $\mc{A}_{\text{master}}$ and those after. Suppose arm $a$ is evicted from $\mc{A}_{\text{master}}$ at round $t_{\ell}^a\in [t_{\ell},t_{\ell+1})$. In particular, this means arm $a\in\mc{A}_t$ for all $t \in [t_{\ell},t_{\ell}^a)$. Thus, it suffices to bound:
\begin{equation}\label{eq:before-master-evict}
	\mb{E}\left[ \pmb{1}\{\mc{E}_1\}\cdot \left( \sum_{a=1}^K \sum_{t=t_{\ell}}^{t_{\ell}^a-1} \frac{\delta_t(\hat{a}_t,a)}{|\mc{A}_t|} + \sum_{a=1}^K \sum_{t=t_{\ell}^a}^{t_{\ell+1}-1}  \frac{\delta_t(\hat{a}_t,a)}{|\mc{A}_t|} \cdot \pmb{1}\{a\in \mc{A}_t\} \right) \right].
\end{equation}
Suppose WLOG that $t_{\ell}^1\leq t_{\ell}^2 \leq \cdots \leq t_{\ell}^K$. Then, for each round $t < t_{\ell}^a$ all arms $a'\geq a$ are retained in $\mc{A}_{\text{master}}$ and thus retained in the candidate arm set $\mc{A}_t$. Thus, $|\mc{A}_t| \geq K+1-a$ for all $t\leq t_{\ell}^a$.

Then, the first double sum in \eqref{eq:before-master-evict} can be bounded by combining our eviction criterion \eqref{eq:evict} with our concentration bounds \Cref{prop:concentration}. Since arm $a$ is not evicted from $\mc{A}_t$ till round $t_{\ell}^a$, on event $\mc{E}_1$ we have for some $c_2>0$:
\[
	\sum_{t=t_{\ell}}^{t_{\ell}^a-1} \delta_t(\hat{a}_t,a) = \sum_{t=t_{\ell}}^{t_{\ell}^a-1} \mb{E}[ \hat{\delta}_t(\hat{a}_t,a)\mid \mc{F}_{t-1}] \leq c_2 \log(T)\sqrt{K(t_{\ell}^a-t_{\ell})\vee K^2}
\]
Then, using the fact that $|\mc{A}_t|\geq K+1-a$ for all $t\in [t_{\ell},t_{\ell}^a)$, we have:
\[
	\sum_{t=t_{\ell}}^{t_{\ell}^a-1} \frac{\delta_t(\hat{a}_t,a)}{|\mc{A}_t|} \leq \frac{c_2\log(T)\sqrt{K(t_{\ell}^a-t_{\ell})\vee K^2}}{K+1-a}.
\]
Then, summing the above R.H.S. over all arms $a$, we have on event $\mc{E}_1$:
\[
	\sum_{a=1}^K \sum_{t=t_{\ell}}^{t_{\ell}^a-1} \frac{\delta_t(\hat{a}_t,a)}{|\mc{A}_t|} \leq c_2\log(K)\log(T)\sqrt{K(t_{\ell+1}-t_{\ell}) \vee K^2}.
\]

Next, we handle the second double sum in \eqref{eq:before-master-evict}. We first observe that if arm $a$ is played after round $t_{\ell}^a$, then it must due to a scheduled replay. The difficulty here is that replays may interrupt each other and so care must be taken in managing the relative regret contribution $\sum_t\delta_t(\hat{a}_t,a)$ (which may be negative if $a\prec \hat{a}_t$) of different overlapping replays.

%
%

Fixing an arm $a$, our strategy is to partition the rounds when $a$ is played by a replay after round $t_{\ell}^a$ according to which replay is active and not accounted for by another replay. This involves carefully designating a subclass of replays whose durations while playing $a$ span all the rounds where $a$ is played after $t_{\ell}^a$. Then, we cover the times when $a$ is played by a collection of intervals corresponding to the schedules of this subclass of replays, on each of which we can employ the eviction criterion \eqref{eq:evict} and concentration like before.


For this purpose, we define the following terminology (which is all w.r.t. a fixed arm $a$):

\begin{definition}
	\begin{enumerate}[(i)]
		\item[]
		\item For each scheduled and activated $\base(s,m)$, let the round $M(s,m)$ be the minimum of two quantities: (a) the last round in $[s,s+m]$ when arm $a$ is retained by $\base(s,m)$ and all of its children, and (b) the last round that $\base(s,m)$ is active and not permanently interrupted. Call the interval $[s,M(s,m)]$ the {\bf active interval} of $\base(s,m)$.
		\item Call a replay $\base(s,m)$ {\bf proper} if there is no other scheduled replay $\base(s',m')$ such that $[s,s+m] \subset (s',s'+m')$ where $\base(s',m')$ will become active again after round $s+m$. In other words, a proper replay is not scheduled inside the scheduled range of rounds of another replay. Let $\textsc{Proper}(t_{\ell},t_{\ell+1})$ be the set of proper replays scheduled to start before round $t_{\ell+1}$.
		\item Call a scheduled replay $\base(s,m)$ {\bf subproper} if it is non-proper and if each of its ancestor replays (i.e., previously scheduled replays whose durations have not concluded) $\base(s',m')$ satisfies $M(s',m')<s$. In other words, a subproper replay either permanently interrupts its parent or does not, but is scheduled after its parent (and all its ancestors) stops playing arm $a$. Let $\Subproper$ be the set of all subproper replays scheduled before round $t_{\ell+1}$.
	\end{enumerate}
\end{definition}

\begin{figure}
    \centering
    \vspace{-3mm}
    \includegraphics[scale=0.5]{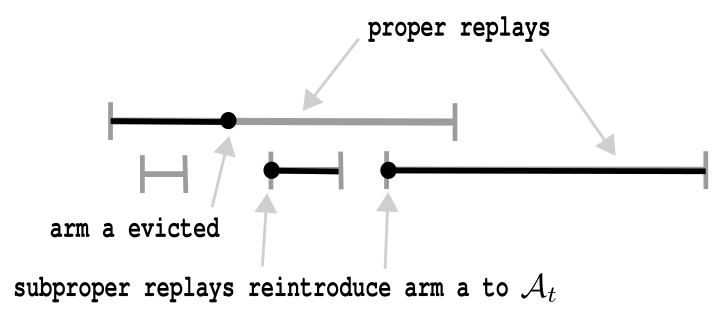}
    \caption{Shown are replay scheduled durations (in gray) with dots marking when arm $a$ is reintroduced to $\mc{A}_t$. Black segments indicate the period $[s,M(s,m)]$ for proper and subproper replays. Note that the rounds where $a\in\mc{A}_t$ in the left unlabeled replay's duration are accounted for by the larger proper replay.}
    \label{fig:proper}
    \vspace{-5mm}
\end{figure}

Equipped with this language, we now show some basic claims which essentially reduce analyzing the complicated hierarchy of replays to analyzing the active intervals of replays in $\textsc{Proper}(t_{\ell},t_{\ell+1})\cup \Subproper$.

\begin{proposition}\label{prop:active-disjoint}
The active intervals
\[
\{[s,M(s,m)]:\base(s,m)\in\textsc{Proper}(t_{\ell},t_{\ell+1})\cup \Subproper\},
\]
are mutually disjoint.
\end{proposition}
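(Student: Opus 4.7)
The plan is to fix two distinct replays $B_1 = \base(s_1, m_1)$ and $B_2 = \base(s_2, m_2)$ in $\textsc{Proper}(t_\ell, t_{\ell+1}) \cup \Subproper$ and to show directly that the active intervals $[s_1, M(s_1, m_1)]$ and $[s_2, M(s_2, m_2)]$ do not intersect. Since at each round the currently active base spawns at most one child replay (the one corresponding to the largest $m$ with $B_{t,m} > 0$ in \Cref{base-alg}), we may assume WLOG $s_1 < s_2$, and it suffices to prove $M(s_1, m_1) < s_2$. The easy case is $s_2 > s_1 + m_1$: clause (b) in the definition of $M(s_1, m_1)$ already bounds it by $s_1 + m_1 < s_2$, regardless of which of the two sets $B_1, B_2$ belong to.

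In the substantive case $s_1 < s_2 \leq s_1 + m_1$, the replay $B_2$ is scheduled while $B_1$'s scheduled duration has not yet concluded, so $B_1$ qualifies as an ancestor of $B_2$ in the sense used to define subproper. If $B_2 \in \Subproper$, then by definition every ancestor of $B_2$ has its active interval terminating strictly before $s_2$; applying this to $B_1$ immediately yields $M(s_1, m_1) < s_2$. If instead $B_2$ is proper, I split once more on whether $s_2 + m_2 \leq s_1 + m_1$ or $s_2 + m_2 > s_1 + m_1$. In the strict-containment subcase, properness of $B_2$ forbids $B_1$ from becoming active again after round $s_2 + m_2$; in the other subcase, $B_1$ cannot resume after $B_2$ concludes anyway, because its own scheduled deadline $s_1 + m_1$ has already elapsed. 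Either way $B_1$ is interrupted at round $s_2$ and never becomes active again, so it is permanently interrupted, and clause (b) of the definition of $M$ gives $M(s_1, m_1) \leq s_2 - 1 < s_2$.

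The main obstacle I expect is purely bookkeeping: aligning the informal notions of ``ancestor'' and ``permanently interrupted'' with the recursive call structure in \Cref{base-alg}. In particular one has to verify that whenever $s_1 < s_2 \leq s_1 + m_1$ the replay $B_1$ really does remain a (possibly distant) ancestor of $B_2$ under the algorithm's actual execution, rather than having already been terminated for some unrelated reason, and that any interruption not followed by a resumption within the episode is correctly captured by clause (b) of the definition of $M$. Once these definitional points are settled, the case split above gives disjointness in a line or two per case.
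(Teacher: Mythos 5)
Your argument is correct and essentially matches the paper's own proof: both reduce disjointness to showing the earlier replay's active interval ends before the later one starts, using the subproper definition (ancestors satisfy $M(s',m')<s$) in one case and permanent interruption plus the bound $M(s,m)\leq s+m$ in the other. You merely organize the case split by the type of the later-scheduled replay rather than by the unordered pair of types as the paper does, which is an equivalent bookkeeping choice.
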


\begin{proof}
	Clearly, the classes of replays $\textsc{Proper}(t_{\ell},t_{\ell+1})$ and $\Subproper$ are disjoint. Next, we show the respective active intervals $[s,M(s,m)]$ and $[s',M(s',m')]$ of any two $\base(s,m)$ and $\base(s',m')\in \textsc{Proper}(t_{\ell},t_{\ell+1})\cup \Subproper$ are disjoint.

	\begin{enumerate}
		\item Proper replay vs. subproper replay: a subproper replay can only be scheduled after the round $M(s,m)$ of the most recent proper replay $\base(s,m)$ (which is necessarily an ancestor). Thus, the active intervals of proper replays and subproper replays are disjoint.
		\item Two distinct proper replays: two such replays can only permanently interrupt each other, and since $M(s,m)$ always occurs before the permanent interruption of $\base(s,m)$, we have the active intervals of two such replays are disjoint.
		\item Two distinct subproper replays: consider two non-proper replays $\base(s,m),\base(s',m')\in \Subproper$ with $s'>s$. The only way their active intervals intersect is if $\base(s,m)$ is an ancestor of $\base(s',m')$. Then, if $\base(s',m')$ is subproper, we must have $s'>M(s,m)$, which means that $[s',M(s',m')]$ and $[s,M(s,m)]$ are disjoint.
%
	\end{enumerate}
\end{proof}

Next, we claim that the active intervals $[s,M(s,m)]$ for $\base(s,m)\in\textsc{Proper}(t_{\ell},t_{\ell+1})\cup\Subproper$ contain all the rounds where $a$ is played after being evicted from $\Amaster$. To show this, we first observe that for each round $t$ when a replay is active, there is a unique proper replay associated to $t$, namely the proper replay scheduled most recently.
Next, note that any round $t>t_{\ell}^a$ where arm $a\in\mc{A}_t$ must belong to the active interval $[s,M(s,m)]$ of the unique proper replay $\base(s,m)$ associated to round $t$, or else satisfies $t>M(s,m)$ in which case a unique subproper replay $\base(s',m')\in \Subproper$ was active and not yet permanently interrupted by round $t$. Thus, it must be the case that $t\in [s',M(s',m')]$.

At the same time, every round $t\in [s,M(s,m)]$ for a proper or subproper $\base(s,m)$ is clearly a round where $a\in\mc{A}_t$ and no such round is accounted for twice by \Cref{prop:active-disjoint}. Thus,
\[
	\{t\in (t_{\ell}^a,t_{\ell+1}): a\in\mc{A}_t\} = \bigsqcup_{\base(s,m)\in\textsc{Proper}(t_{\ell},t_{\ell+1})\cup\Subproper} [s,M(s,m)].
\]
Then, we can rewrite the second double sum in \eqref{eq:before-master-evict} as:
\[
	\sum_{a=1}^K \sum_{\base(s,m)\in\textsc{Proper}(t_{\ell},t_{\ell+1})\cup \Subproper} \pmb{1}\{B_{s,m}=1\} \sum_{t=s\vee t_{\ell}^a}^{M(s,m)} \frac{\delta_t(\hat{a}_t,a)}{|\mc{A}_t|}.
\]
Recall in the above that the Bernoulli $B_{s,m}$ (see \Cref{line:add-replay} of \Cref{meta-alg}) decides whether $\base(s,m)$ is scheduled.

Further bounding the sum over $t$ above by its positive part, we can expand the sum over $\base(s,m)\in\textsc{Proper}(t_{\ell},t_{\ell+1})\cup\Subproper$ to be over all $\base(s,m)$, or obtain:
\begin{equation}\label{eq:positive-part}
	\sum_{a=1}^K \sum_{\base(s,m)} \pmb{1}\{B_{s,m}=1\}\left( \sum_{t=s\vee t_{\ell}^a}^{M(s,m)} \frac{\delta_t(\hat{a}_t,a)}{|\mc{A}_t|}\cdot\pmb{1}\{a\in\mc{A}_t\} \right)_+ ,
\end{equation}

where the sum is over all replays $\base(s,m)$, i.e. $s\in \{t_{\ell}+1,\ldots,t_{\ell+1}-1\}$ and $m\in \{2,4,\ldots,2^{\lceil \log(T)\rceil}\}$.
It then remains to bound the contributed relative regret of each $\base(s,m)$ in the interval $[s\vee t_{\ell}^a,M(s,m)]$, which will follow similarly to the previous steps. Fix $s,m$ and suppose $t_{\ell}^a+1 \leq M(s,m)$ since otherwise $\base(s,m)$ contributes no regret in \eqref{eq:positive-part}.

Then, following similar reasoning as before, i.e. combining our concentration bound \eqref{eq:error-bound} with the eviction criterion \eqref{eq:evict}, we have for a fixed arm $a$:
\[
	\sum_{t=s \vee t_{\ell}^{a}}^{M(s,m)} \frac{\delta_t(\hat{a}_t,a)}{|\mc{A}_t|} \leq \frac{c_2\log(T)\sqrt{Km \vee K^2}}{\min_{t\in [s,M(s,m)]} |\mc{A}_t|},
\]

Plugging this into \eqref{eq:positive-part} and switching the ordering of the outer double sum, we obtain (now for clarity overloading the notation $M(s,m,a)$ to also depend on the reference arm $a$):
\[
	\sum_{\base(s,m)} \pmb{1}\{B_{s,m}=1\}\cdot c_2\log(T)\sqrt{Km \vee K^2} \sum_{a=1}^K \frac{1}{\min_{t\in [s,M(s,m.a)]} |\mc{A}_t|}.
\]
We claim the above innermost sum over $a$ is at most $\log(K)$. For a fixed $\base(s,m)$, if $a_k$ is the $k$-th arm in $[K]$ to be evicted by $\base(s,m)$ or any of its children, then $\min_{t\in [s,M(s,m,a_k)]} |\mc{A}_t| \geq K+1-k$. Thus, our claim follows follows from $\sum_{k=1}^K \frac{1}{K+1-k} \leq \log(K)$.

Let $R(m) :=  c_2\log(K)\log(T)\sqrt{Km\vee K^2}$ which is the bound we've obtained so far on the relative regret for a single $\base(s,m)$. Then, plugging $R(m)$ into \eqref{eq:positive-part} gives:
\begin{align*}
	\mb{E}&\left[ \pmb{1}\{\mc{E}_1\} \sum_{a=1}^K \sum_{t=t_{\ell}^a}^{t_{\ell+1}-1}  \frac{\delta_t(\hat{a}_t,a)}{|\mc{A}_t|} \cdot \pmb{1}\{a\in \mc{A}_t\} \right] \leq \mb{E}_{t_{\ell}}\left[ \mb{E}\left[ \sum_{\base(s,m)} \pmb{1}\{B_{s,m}=1\}\cdot R(m) \mid t_{\ell} \right] \right]\\
																					  &=  \mb{E}_{t_{\ell}}\left[ \sum_{s=t_{\ell}}^T \sum_m  \mb{E}[ \pmb{1}\{B_{s,m}=1\}\cdot \pmb{1}\{s<t_{\ell+1}\} \mid t_{\ell}] \cdot R(m) \right].
\end{align*}
Next, we observe that $B_{s,m}$ and $\pmb{1}\{s<t_{\ell+1}\}$ are independent conditional on $t_{\ell}$ since $\pmb{1}\{s<t_{\ell+1}\}$ only depends on the scheduling and observations of base algorithms scheduled before round $s$. Thus, recalling that $\mb{P}(B_{s,m}=1) = 1/\sqrt{m\cdot (s-t_{\ell})}$,
\begin{align*}
	\mb{E}[ \pmb{1}\{B_{s,m}=1\}\cdot \pmb{1}\{s<t_{\ell+1}\} \mid t_{\ell}] &=  \mb{E}[ \pmb{1}\{B_{s,m}=1\} \mid t_{\ell}] \cdot \mb{E}[ \pmb{1}\{s<t_{\ell+1}\} \mid t_{\ell}]\\
										 &= \frac{1}{\sqrt{m\cdot (s-t_{\ell})}} \cdot \mb{E}[ \pmb{1}\{ s < t_{\ell+1}\} \mid t_{\ell}].
\end{align*}
Plugging this into our expectation from before and unconditioning, we obtain:
\begin{equation}\label{eq:regret-replay}
	\mb{E}\left[ \sum_{s=t_{\ell}+1}^{t_{\ell+1}-1} \sum_{n=1}^{\lceil \log(T)\rceil} \frac{1}{\sqrt{2^n \cdot (s - t_{\ell})}} \cdot R(2^n) \right] \leq c_3\log^3(T) \mb{E}_{t_{\ell},t_{\ell+1}} \left[  \sqrt{K(t_{\ell+1} - t_{\ell}) \vee K^2}\right].
\end{equation}
Then, it suffices to bound $\sqrt{K(t_{\ell+1}-t_{\ell})\vee K^2}$. First, we claim that every phase $[\tau_i,\tau_{i+1})$ is length at least $K/4$. Observe by our notion of significant regret, that an arm $a$ incurring significant regret on the interval $[s_1,s_2]$ means
\[
	\sum_{t=s_1}^{s_2} \delta_t(a_t^*,a) \geq \sqrt{K\cdot (s_2-s_1)} \implies 2\cdot (s_2-s_1) \geq \sqrt{K\cdot (s_2 - s_1)} \implies s_2 - s_1 \geq K/4.
\]
Thus, each significant phase (Definition~\ref{definition:sig-shift}) must be at least $K/4$ rounds long meaning $\tau_{i+1}-\tau_i = (\tau_{i+1} - \tau_i) \vee K/4$. This will allow us to remove the ``$\vee K^2$'' in \eqref{eq:regret-replay}. In particular, since the episode length $t_{\ell+1}-t_{\ell}$ in \eqref{eq:regret-replay} can be upper bounded by the combined length of all significant phases $[\tau_i,\tau_{i+1})$ interesecting episode $[t_{\ell},t_{\ell+1})$, \eqref{eq:regret-replay} gives us the desired bound.

%
%
\

\subsection{Bounding $\mb{E}[\sum_{t=t_{\ell}}^{t_{\ell+1}-1} \delta_t(\atsharp,\hat{a}_t)]$: Regret of Candidate Arm to Safe Arm}

	We first invoke an elementary lemma based on SST and STI to further help us decompose the regret.

    \begin{lemma}\label[lemma]{lem:upper-triangle-generic}
		For any three arms $a,b,c$, under $\ssti$:
		\[
			\delta_t(a,c) \leq 2\cdot\delta_t(a,b)+\delta_t(b,c)+3\cdot \delta_t(a_t^*,a),
		\]
		where $a_t^*$ is the winner arm.
	\end{lemma}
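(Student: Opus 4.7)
The plan is to prove this generalization of Lemma~\ref{lem:upper-triangle} by case analysis on the relative ordering of $a,b,c$ under $\succ_t$, splitting first on whether $a \succeq_t b$ or $b \succ_t a$. The key observation driving the proof is that the $3\delta_t(a_t^*,a)$ correction is only needed in the second case, in order to absorb the then-negative contribution of $2\delta_t(a,b)$. The pivotal fact for that is SST applied to the chain $a_t^* \succeq_t b \succ_t a$, which yields $\delta_t(a_t^*,a) \geq \delta_t(b,a) = -\delta_t(a,b)$, so that $3\delta_t(a_t^*,a) + 2\delta_t(a,b) \geq -\delta_t(a,b) \geq 0$.

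\paragraph{Two main cases.}
If $a \succeq_t b$, I would prove the stronger bound $\delta_t(a,c) \leq 2\delta_t(a,b) + \delta_t(b,c)$, with no $\delta_t(a_t^*,a)$ term at all. This follows by applying Lemma~\ref{lem:upper-triangle} to $\delta_t(a_t^*,c)$, then using one STI step on $a_t^* \succeq_t a \succeq_t b$ to write $\delta_t(a_t^*,b) \leq \delta_t(a_t^*,a) + \delta_t(a,b)$, and invoking $\delta_t(a,c)\leq \delta_t(a_t^*,c)$ together with $\delta_t(a_t^*,a)\geq 0$. If instead $b \succ_t a$, I would further sub-case on the position of $c$: (i) $b \succ_t a \succeq_t c$ via SST $\delta_t(a,c) \leq \delta_t(b,c)$; (ii) $b \succ_t c \succ_t a$ via STI on the chain $b \succ_t c \succ_t a$, giving $\delta_t(a,c) \leq \delta_t(a,b) + \delta_t(b,c)$; (iii) $c \succ_t b \succ_t a$, the delicate case below. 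Each of (i) and (ii) combines cleanly with the pivotal SST bound to finish.

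\paragraph{Main obstacle.}
The hard sub-case is $c \succ_t b \succ_t a$, where $\delta_t(a,b),\delta_t(b,c),\delta_t(a,c)$ are all non-positive and SST/STI on the chain $c \succ_t b \succ_t a$ only yields a \emph{lower} bound on $\delta_t(a,c)$, not an upper bound. My plan here is to exploit the extended chain $a_t^* \succeq_t c \succ_t b \succ_t a$ to collect the three SST inequalities $\delta_t(a_t^*,a) \geq -\delta_t(a,b)$, $\delta_t(a_t^*,a) \geq -\delta_t(b,c)$ (via $\delta_t(a_t^*,a) \geq \delta_t(a_t^*,b) \geq \delta_t(c,b)$), and $\delta_t(a_t^*,a) \geq -\delta_t(a,c)$, and then combine them with the SST upper bound $\delta_t(a,c) \leq \min(\delta_t(a,b),\delta_t(b,c))$ to verify that the RHS dominates $\delta_t(a,c)$ after elementary rearrangement. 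As a shorter but slightly weaker alternative, one can apply Lemma~\ref{lem:upper-triangle} twice---first to $\delta_t(a_t^*,c)$, then to $\delta_t(a_t^*,b)$---combined with $\delta_t(a,c) \leq \delta_t(a_t^*,c)$, to obtain $\delta_t(a,c) \leq 4\delta_t(a_t^*,a) + 2\delta_t(a,b) + \delta_t(b,c)$ in two lines; sharpening the factor from $4$ to the stated $3$ is precisely what the case analysis buys.
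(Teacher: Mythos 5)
Your proposal is correct and follows essentially the same route as the paper's proof: an exhaustive case analysis on the ordering of $a,b,c$ under $\succ_t$, with SST absorbing the sign-reversed terms (via $\delta_t(a_t^*,a)\geq \delta_t(b,a)$, your ``pivotal'' inequality, which is exactly what the paper uses in its cases $b\succ_t a\succ_t c$ and $b,c\succ_t a$) and STI handling the aligned chains; your grouping merely merges the paper's cases $a\succ_t b,c$ and $c\succ_t a\succ_t b$ into one and splits $b,c\succ_t a$ into two. The one slip is in your $a\succeq_t b$ case: the chain you describe---\Cref{lem:upper-triangle} applied at $a_t^*$, then STI to split $\delta_t(a_t^*,b)\leq\delta_t(a_t^*,a)+\delta_t(a,b)$---yields $\delta_t(a,c)\leq 2\delta_t(a_t^*,a)+2\delta_t(a,b)+\delta_t(b,c)$, and ``invoking $\delta_t(a_t^*,a)\geq 0$'' can only pad this up to the stated bound with $3\delta_t(a_t^*,a)$; it cannot delete a nonnegative term from the right-hand side of an upper bound, so the claimed ``stronger bound with no $\delta_t(a_t^*,a)$ term'' does not follow from that derivation. (That stronger bound is in fact true whenever $a\succeq_t b$, and when $a\succ_t b,c$ the paper obtains it by running the proof of \Cref{lem:upper-triangle} with $a$ itself in the role of the top arm.) Since the lemma only requires the $3\delta_t(a_t^*,a)$ version, this slip is harmless and your argument establishes the stated inequality in every case.
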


	\begin{proof}
		We handle all the different orderings:
		\begin{enumerate}[(a)]
			\item $a\succ_t b,c$: this already follows from \Cref{lem:upper-triangle} since then $\delta_t(a,c)\leq 2\cdot \delta_t(a,b)+\delta_t(b,c)$.
			\item $c\succ_t a\succ_t b$: $\delta_t(a,c) \leq 0 \leq \delta_t(a,b)$ and $\delta_t(a^*,b) \geq \delta_t(c,b)$ by SST. Summing these together gives the result.
			\item $b\succ_t a\succ_t c$: $\delta_t(a,c)\leq \delta_t(b,c)$ and $\delta_t(a_t^*,a)\geq \delta_t(b,a)$ by SST. Summing these together gives the result.
			\item $b,c\succ_t a$: $\delta_t(a^*,a)$ dominates the first two terms on the desired inequality's RHS.
		\end{enumerate}
	\end{proof}


%
	Then, using \Cref{lem:upper-triangle-generic}, we further decompose the regret about the {\em last master arm} $a_{\ell}$ defined in \Cref{sec:regret-upper-bound}, which is the last arm to be evicted from $\Amaster$ in episode $[t_{\ell},t_{\ell+1})$. We have
	\begin{equation}\label{eq:decomp-appendix}
		 \sum_{t=t_{\ell}}^{t_{\ell+1}-1} \delta_t(a_t^{\sharp},\hat{a}_t) \leq
		 2 \sum_{t=t_{\ell}}^{t_{\ell+1}-1} \delta_t(\atsharp,a_{\ell}) + \sum_{t=t_{\ell}}^{t_{\ell+1}-1}\delta_t(a_{\ell},\hat{a}_t) + 3 \sum_{t=t_{\ell}}^{t_{\ell+1}-1} \delta_t(a_t^*,\atsharp).
	\end{equation}
	As said earlier, the sum $\sum_{t=t_{\ell}}^{t_{\ell+1}-1} \delta_t(a_t^*,\atsharp)$ is of the right order. Meanwhile, the sum $\sum_{t=t_{\ell}}^{t_{\ell+1}-1} \delta_t(a_{\ell},\hat{a}_t)$ is bounded using our candidate arm switching criterion \eqref{eq:switch}. If $\hat{a}_t=a_{\ell}$ for every round $t\in[t_{\ell},t_{\ell+1})$ we are already done. Otherwise, let $m_{\ell}$ be the last round that $a_{\ell}$ is not the candidate arm $\hat{a}_t$. Then, we must have that since arm $a_{\ell}$ is not evicted until round $t_{\ell+1}-1$:
	\[
		\sum_{t=t_{\ell}}^{t_{\ell+1}-1} \hat{\delta}_t(a_{\ell},\hat{a}_t) = \sum_{t=t_{\ell}}^{m_{\ell}-1} \hat{\delta}_t(a_{\ell},\hat{a}_t) \leq C\log(T) \sqrt{K\cdot (m_{\ell}-t_{\ell}) \vee K^2} 
	\]
	Then, by concentration (\Cref{prop:concentration}) and the fact from earlier that each phase $[\tau_i,\tau_{i+1})$ is at least $K/4$ rounds (so that ``$\vee K^2$'' can be removed in the above), we have that $\sum_{t=t_{\ell}}^{t_{\ell+1}-1} \delta_t(a_{\ell},\hat{a}_t)$ is of the right order.

	Then, turning back to \eqref{eq:decomp-appendix}, it remains to bound the regret of $a_{\ell}$ to $\atsharp$ over the episode $[t_{\ell},t_{\ell+1})$.

	\subsection{Bounding $\mb{E}[\sum_{t=t_{\ell}}^{t_{\ell+1}-1} \delta_t(\atsharp,a_{\ell})]$: Regret of Last Master Arm to Safe Arm}
	\label{app:bounding-last}

	First, following the outline of \Cref{sec:regret-upper-bound}, we recall the definition of the {\em last safe arm} $\atsharp$ at round $t$ which is the last arm to incur significant regret in the unique phase $[\tau_i,\tau_{i+1})$ containing round $t$.


	We next formally define a bad segment, alluded to in \Cref{sec:regret-upper-bound}. In what follows, bad segments will be defined with respect to a fixed arm $a$ and conditional on the episode start time $t_{\ell}$. We will then show that, with respect to any arm $a$, not too many bad segments will elapse before $a$ is evicted from $\Amaster$. In particular, this will hold for $a=a_{\ell}$ which will ultimately be used to bound $\delta_t(\atsharp,a_{\ell})$ across the episode $[t_{\ell},t_{\ell+1})$.

	\begin{definition}\label{defn:bad-segment}
		Fix the episode start time $t_{\ell}$, and let $[\tau_i,\tau_{i+1})$ be any phase intersecting $[t_{\ell},T)$. For any arm $a$, define rounds $s_{i,0}(a),s_{i,1}(a),s_{i,2}(a)\ldots\in [t_{\ell} \vee \tau_{i}, \tau_{i+1})$ recursively as follows: let $s_{i,0}(a) := t_{\ell} \vee \tau_{i}$ and define $s_{i,j}(a)$ as the smallest round in $(s_{i,j-1}(a),\tau_{i+1})$ such that arm $a$ satisfies for some fixed $c_4>0$:
			\begin{equation}\label{eq:segment}
				\sum_{t = s_{i,j-1}(a)}^{s_{i,j}(a)} \delta_{t}(\atsharp,a) \geq c_4 \log(T)\sqrt{K\cdot (s_{i,j}(a) - s_{i,j-1}(a))},
			\end{equation}
			if such a round $s_{i,j}(a)$ exists. Otherwise, we let the $s_{i,j}(a) := \tau_{i+1}-1$. We refer to any interval $[s_{i,j-1}(a),s_{i,j}(a))$ as a {\bf critical segment}, and as a {\bf bad segment} (w.r.t. arm $a$) if \eqref{eq:segment} above holds.
	\end{definition}

	Note that the above definition only depends on the arm $a$ and the episode start time $t_{\ell}$ and that, conditional on these variables, they are fixed in the environment. Observe also that the arm $\atsharp$ is fixed within any critical segment $[s_{i,j-1}(a),s_{i,j}(a))\subseteq [\tau_i,\tau_{i+1})$ since a significant shift does not occur inside $[\tau_i,\tau_{i+1})$.

	Now relating this notion of a bad segment to our goal of bounding regret, a given bad segment $[s_{i,j}(a),s_{i,j}(a))$ only contributes order $\sqrt{K\cdot (s_{i,j}(a)-s_{i,j-1}(a))}$ to the regret of $a$ to $\atsharp$. At the same time, we claim that a well-timed replay (see \Cref{defn:perfect-replay} below) running from $s_{i,j-1}(a)$ to $s_{i,j}(a)$ is capable of evicting arm $a$. This in turn allows us to reduce the regret bounding to studying the number and lengths of bad segments which elapse before one is detected by such a replay.

We first define such a well-timed and {\em perfect } replay.

	\begin{definition}\label{defn:perfect-replay}
		Let $\tilde{s}_{i,j}(a) := \lceil\frac{s_{i,j}(a)+s_{i,j+1}(a)}{2}\rceil$ denote the approximate midpoint of $[s_{i,j}(a),s_{i,j+1}(a))$. Given a bad segment $[s_{i,j}(a),s_{i,j+1}(a))$, define a {\bf perfect replay} w.r.t. $[s_{i,j}(a),s_{i,j+1}(a))$ as a call of $\base(\tstart,m)$ where $\tstart\in [s_{i,j}(a),\tilde{s}_{i,j}(a)]$ and $m\geq s_{i,j+1}(a) - s_{i,j}(a)$
    \end{definition}

	Next, we analyze the behavior of a perfect replay on the bad segment $[s_{i,j}(a),s_{i,j+1}(a))$. Going forward, we will use the simpler notation $\aisharp$ to denote the last safe arm of a phase $[\tau_i,\tau_{i+1})$, known in context.

	\begin{proposition}\label{prop:behavior}
		Suppose the good event $\mc{E}_1$ holds (cf. \Cref{prop:concentration}). Let $[s_{i,j}(a),s_{i,j+1}(a))$ be a bad segment with respect to arm $a$. Fix an integer $m \geq s_{i,j+1}(a) - s_{i,j}(a)$. Then, if a perfect replay with respect to $[s_{i,j}(a),s_{i,j+1}(a))$ is scheduled, arm $a$ will be evicted from $\Amaster$ by round $s_{i,j+1}(a)$. 
%
	\end{proposition}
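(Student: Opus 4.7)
My strategy will be to convert the bad-segment lower bound for $\sum_t \delta_t(\aisharp,a)$ into a matching lower bound for $\sum_t \delta_t(\hat{a}_t,a)$, and then combine it with concentration to certify that the $\Amaster$-eviction criterion \eqref{eq:evict} fires by round $s_{i,j+1}(a)$. Throughout I will condition on the good event $\mc{E}_1$ from \Cref{prop:concentration}; the complement contributes a negligible $1/T$ term to the expected regret. Write $L := s_{i,j+1}(a)-s_{i,j}(a)$, and let $\aisharp$ denote the (fixed within the phase) last safe arm of $[\tau_i,\tau_{i+1})$. Applying \Cref{lem:upper-triangle-generic} pointwise with $(a,b,c)=(\aisharp,\hat{a}_s,a)$ and rearranging gives, for any window $[s_1,s_2]$,
\[
\sum_{s=s_1}^{s_2}\delta_s(\hat{a}_s,a) \;\geq\; \sum_{s=s_1}^{s_2}\delta_s(\aisharp,a) \;-\; 2\sum_{s=s_1}^{s_2}\delta_s(\aisharp,\hat{a}_s) \;-\; 3\sum_{s=s_1}^{s_2}\delta_s(a_s^*,\aisharp).
\]
The window I plan to use is $[\tstart,\, s_{i,j+1}(a)-1]$, which lies inside both $[t_{\ell},s_{i,j+1}(a))$ (the range over which the $\Amaster$-eviction rule inspects intervals) and inside $[\tstart,\tstart+m]$, since $m\geq L \geq s_{i,j+1}(a)-\tstart$.

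\textbf{Lower bound over $[\tstart,s_{i,j+1}-1]$.} I will combine two facts from \Cref{defn:bad-segment}: the triggering inequality $\sum_{s=s_{i,j}}^{s_{i,j+1}}\delta_s(\aisharp,a)\geq c_4\log(T)\sqrt{KL}$, and the minimality of $s_{i,j+1}(a)$, which gives $\sum_{s=s_{i,j}}^{\tstart-1}\delta_s(\aisharp,a) < c_4\log(T)\sqrt{K(\tstart-s_{i,j})}$. Since $\tstart \leq \tilde{s}_{i,j} \leq s_{i,j}+L/2+1$ and the round $s_{i,j+1}$ itself contributes at most $1/2$, subtracting yields $\sum_{s=\tstart}^{s_{i,j+1}-1}\delta_s(\aisharp,a) \geq c_4(1-1/\sqrt{2})\log(T)\sqrt{KL} - O(\log(T)\sqrt{K})$, i.e.\ $\Omega(\log(T)\sqrt{KL})$. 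Note also that the bad-segment inequality itself forces $L\gtrsim K\log^2(T)$ because each summand is at most $1/2$; consequently $\sqrt{K(s_{i,j+1}-\tstart)\vee K^2}$ collapses to $\sqrt{K(s_{i,j+1}-\tstart)}$ in the remainder of the argument.

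\textbf{Upper bounds on the two nuisance sums.} First, $\sum_{s=\tstart}^{s_{i,j+1}-1}\delta_s(a_s^*,\aisharp)\leq \sqrt{KL}$ directly from \Cref{definition:sig-shift}, since $\aisharp$ is the last safe arm of $[\tau_i,\tau_{i+1})$ and thus cannot accumulate significant regret on a sub-interval of its phase. For $\sum_s\delta_s(\aisharp,\hat{a}_s)$, I will first show $\aisharp$ is never evicted from $\Amaster$ (and hence remains in $\mc{A}_s$ throughout the perfect replay): by SST, $\delta_s(\hat{a}_s,\aisharp)\leq\delta_s(a_s^*,\aisharp)$, so $\sum\delta_s(\hat{a}_s,\aisharp)\leq\sqrt{KL}$, and \Cref{prop:concentration} keeps $\sum\hat{\delta}_s(\hat{a}_s,\aisharp)$ below the eviction threshold once $C$ is taken large enough relative to $c_1$. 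With $\aisharp$ always active, the switching criterion \eqref{eq:switch} with reference arm $\aisharp$ and anchor $\tstart$ forbids $\sum_{s=\tstart}^{t}\hat{\delta}_s(\aisharp,\hat{a}_s)$ from exceeding $C\log(T)\sqrt{K(t-\tstart)}$ at any round $t\leq s_{i,j+1}-1$ with $\hat{a}_{t+1}\neq\aisharp$, while on rounds with $\hat{a}_s=\aisharp$ the summand is zero. Concentration one more time then delivers $\sum_{s=\tstart}^{s_{i,j+1}-1}\delta_s(\aisharp,\hat{a}_s)\lesssim\log(T)\sqrt{KL}$.

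\textbf{Combining and the main obstacle.} Plugging the three bounds back into the decomposition and choosing $c_4$ sufficiently large relative to $C$ and the absolute constants from \Cref{prop:concentration}, I obtain $\sum_{s=\tstart}^{s_{i,j+1}-1}\delta_s(\hat{a}_s,a)\geq\Omega(\log(T)\sqrt{KL})$. A final application of \Cref{prop:concentration} promotes this to $\sum_{s=\tstart}^{s_{i,j+1}-1}\hat{\delta}_s(\hat{a}_s,a)\geq C\log(T)\sqrt{K(s_{i,j+1}-\tstart)}$, i.e.\ the eviction criterion \eqref{eq:evict} over $[\tstart,s_{i,j+1}-1]\subseteq [t_{\ell},s_{i,j+1})$. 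Because the line $\Amaster\leftarrow\Amaster\setminus\{\cdot\}$ of \Cref{base-alg} is executed by whichever base algorithm is active (the perfect replay itself or a child that has interrupted it), and because that line sweeps every sub-interval since $t_{\ell}$, arm $a$ will be removed from $\Amaster$ by round $s_{i,j+1}$ even if the perfect replay is permanently interrupted. The main technical subtlety I anticipate lies in bounding $\sum_s\delta_s(\aisharp,\hat{a}_s)$: the candidate can switch multiple times inside $[\tstart,s_{i,j+1}-1]$, so a direct appeal to the switching criterion only controls the sum between consecutive switches; a clean argument must combine the non-eviction of $\aisharp$ with the vanishing of the summand on rounds when $\hat{a}_s=\aisharp$ to piece together a uniform bound over the whole interval.
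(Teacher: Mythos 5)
Your proposal is correct and follows essentially the same route as the paper's proof: lower-bound the tail of the bad segment (the paper does this over $[\tilde{s}_{i,j}(a),s_{i,j+1}(a)]$ citing Lemma 4 of \citet{Suk22}, you derive it directly over $[\tstart,s_{i,j+1}(a)-1]$ by subtracting the prefix), apply \Cref{lem:upper-triangle-generic} to pass from $\delta_t(\aisharp,a)$ to $\delta_t(\hat{a}_t,a)$, kill the two nuisance sums via the definition of the last safe arm and via non-eviction of $\aisharp$ plus the switching criterion \eqref{eq:switch}, and finish with \Cref{prop:concentration} to trigger the $\Amaster$-eviction rule. The subtleties you flag (multiple candidate switches, survival of the master-set eviction under interruption) are handled the same way in the paper's full argument.
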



%

	\begin{proof}
		Suppose event $\mc{E}_1$ (i.e., our concentration bound \eqref{eq:error-bound}) holds. We first observe that by elementary calculations and the definition of the rounds $s_{i,j}(a)$, we have (in an identical fashion to Lemma 4 of \citet{Suk22}):
	\begin{equation}\label{eq:relative-gap-detect}
		\sum_{t=\tilde{s}_{i,j}(a)}^{s_{i,j+1}(a)} \delta_{t}(a_i^\sharp,a) \geq \frac{c_4}{4} \log(T)\sqrt{ K\left( s_{i,j+1}(a) - \tilde{s}_{i,j}(a)\right)},
	\end{equation}
	where $\tilde{s}_{i,j}(a)$ is the midpoint of $[s_{i,j}(a),s_{i,j+1}(a))$ as defined in \Cref{defn:perfect-replay}. The above will come in handy in showing arm $a$ is evicted over the second half of the bad segment $[\tilde{s}_{i,j}(a),s_{i,j+1}(a)]$.

	Next, following the intuition given in \Cref{sec:regret-upper-bound}, in order to relate $\delta_t(\aisharp,a)$ to $\delta_t(\hat{a}_t,a)$, we again use SST and STI via \Cref{lem:upper-triangle-generic} on inequality \eqref{eq:relative-gap-detect}:

		\begin{equation}\label{eq:3-bad}
			\sum_{t=\tilde{s}_{i,j}(a)}^{s_{i,j+1}(a)} 2\cdot\delta_t(a_i^{\sharp},\hat{a}_t) + \delta_t(\hat{a}_t,a) + 3\cdot \delta_t(a_t^*,a_i^{\sharp}) \geq \frac{c_4}{4}\log(T)\sqrt{K\left( s_{i,j+1}(a) - \tilde{s}_{i,j}(a) \right)}.
		\end{equation}
	We next show that $\sum_{t=\tilde{s}_{i,j}(a)}^{s_{i,j+1}(a)} \delta_t(\aisharp,\hat{a}_t)$ and $\sum_{t=\tilde{s}_{i,j}(a)}^{s_{i,j+1}(a)} \delta_t(a_t^*,\aisharp)$ on the above LHS are small.

	First, it is clear that any perfect replay $\base(\tstart,m)$ will not evict $\aisharp$ since otherwise it incurs significant regret within phase $[\tau_i,\tau_{i+1})$ (see the earlier \Cref{lem:counting-eps}). At the same time, by the candidate arm switching criterion \eqref{eq:switch} and concentration:
	\begin{align*}
		\sum_{t=\tilde{s}_{i,j}(a)}^{s_{i,j+1}(a)  } \delta_t(\aisharp,\hat{a}_t) \leq c_5\log(T)\sqrt{K\left( s_{i,j+1}(a) - \tilde{s}_{i,j}(a) \right)}.
		\end{align*}
	Meanwhile, by the definition of significant regret (\Cref{definition:sig-shift}),
	\begin{align*}
		\sum_{t=\tilde{s}_{i,j}(a)}^{s_{i,j+1}(a)  } \delta_t(a_t^*,a_i^{\sharp}) \geq \sqrt{K\left( s_{i,j+1}(a) - \tilde{s}_{i,j}(a) \right)}.
	\end{align*}
	Thus, for sufficiently large $c_4>0$ in the definition of bad segments (\Cref{defn:bad-segment}), we have that the above two inequalities can be combined with \eqref{eq:3-bad} to yield:
	\[
		\sum_{t=\tilde{s}_{i,j}(a)}^{s_{i,j+1}(a)  } \delta_t(\hat{a}_t,a) \geq \sqrt{K\left( s_{i,j+1}(a) - \tilde{s}_{i,j}(a) \right)}.
	\]
	If arm $a$ is evicted from $\Amaster$ before round $s_{i,j+1}(a)$, then we are already done. Otherwise, using the fact that $\mb{E}[\hat{\delta}_t(\hat{a}_t,a) | \mc{F}_{t-1}] = \delta_t(\hat{a}_t,a)$ for any round $t\in [\tilde{s}_{i,j}(a),s_{i,j+1}(a)]$ with $a\in\mc{A}_t$, we have that arm $a$ will be evicted at round $s_{i,j+1}(a)$ using the above inequality and concentration.
	\end{proof}

	It remains to show that, for any arm $a$, a perfect replay is scheduled w.h.p. before too much regret is incurred on the elapsed bad segments w.r.t. $a$. In particular, this will hold for the last master arm $a_{\ell}$, allowing us to bound the remaining term $\mb{E}[\sum_{t=t_{\ell}}^{t_{\ell+1}-1} \delta_t(\atsharp,a_{\ell})]$. The argument will be identical to that of Appendix B.2 of \citet{Suk22}.


	First, fix an arm $a$ and an episode start time $t_{\ell}$. Then, define the {\em bad round} $s(a)>t_{\ell}$ as follows:
	\begin{definition}{(bad round)}\label{defn:bad-round}
		For a fixed round $t_{\ell}$ and arm $a$, the {\bf bad round} $s(a)>t_{\ell}$ is defined as the smallest round which satisfies, for some fixed $c_6>0$:
		\begin{equation}\label{eq:big-segment-regret}
			\ds\sum_{(i,j)}  \sqrt{s_{i,j+1}(a)-s_{i,j}(a)} > c_6\log(T) \sqrt{s(a) - t_{\ell}},
		\end{equation}
		where the above sum is over all pairs of indices $(i,j)\in\mb{N}\times\mb{N}$ such that $[s_{i,j}(a),s_{i,j+1}(a))$ is a bad segment with $s_{i,j+1}(a)<s(a)$.
	\end{definition}

	Our goal is then to then to show that arm $a$ is evicted by some perfect replay scheduled within episode $[t_{\ell},t_{\ell+1})$ with high probability before the bad round $s(a)$ occurs. Going forward, to simplify notation we will drop the dependence on the fixed arm $a$ in some variables.


	For each bad segment $[s_{i,j}(a),s_{i,j+1}(a))$, recall that $\tilde{s}_{i.j}(a)$ is the approximate midpoint between $s_{i,j}(a)$ and $s_{i,j+1}(a)$ (see \Cref{defn:perfect-replay}). Next, let $m_{i,j} := 2^n$ where $n\in\mb{N}$ satisfies:
		\[
			2^n \geq s_{i,j+1}(a) - s_{i,j}(a) > 2^{n-1}.
		\]
		Plainly, $m_{i,j}$ is a dyadic approximation of the bad segment length. Next, recall that the Bernoulli $B_{t,m}$ decides whether $\base(t,m)$ is scheduled at round $t$ (see \Cref{line:add-replay} of \Cref{meta-alg}). If for some $t\in [s_{i,j}(a),\tilde{s}_{i,j}(a)]$, $B_{t,m_{i,j}}=1$, i.e. a perfect replay is scheduled, then $a$ will be evicted from $\Amaster$ by round $s_{i,j+1}(a)$ (\Cref{prop:behavior}). We will show this happens with high probability via concentration on the sum
		\[
			S(a,t_{\ell}) := \sum_{(i,j): s_{i,j+1}(a)<s(a)} \sum_{t=s_{i,j}(a)}^{\tilde{s}_{i,j}(a)} B_{t,m_{i,j}},
		\]
		Note that the random variable $S(a,t_{\ell})$ only depends on the replay scheduling probabilities $\{B_{s,m}\}_{s,m}$ given a fixed arm $a$ and episode start time $t_{\ell}$, since the bad round $s(a)$ is also fixed given these quantities. This means that $S(a,t_{\ell})$ is an independent sum of Bernoulli random variables $B_{t,m_{i,j}}$, conditional on $t_{\ell}$. Then, a Chernoff bound over the randomness of $S(a,t_{\ell})$, conditional on $t_{\ell}$ yields
		\[
			\mb{P}\left( S(a,t_{\ell}) \leq \frac{\mb{E}[ S(a,t_{\ell}) \mid t_{\ell}]}{2} \mid t_{\ell}\right) \leq \exp\left(-\frac{\mb{E}[ S(a,t_{\ell}) \mid t_{\ell}]}{8}\right).
		\]
		The above RHS error probability is bounded above above by $1/T^3$ by observing:
		\begin{align*}
			\mb{E}\left[ S(a,t_{\ell}) \mid t_{\ell}\right] &\geq \ds\sum_{(i,j)}  \sum_{t=s_{i,j}(a)}^{\tilde{s}_{i,j}(a)} \frac{1}{\sqrt{m_{i,j}\cdot (t - t_{\ell})}}
											     \geq \frac{1}{4}\ds\sum_{(i,j)} \sqrt{\frac{s_{i,j+1}(a)-s_{i,j}(a)}{s(a) - t_{\ell}}}
											     \geq \frac{c_6}{4} \log(T),
		\end{align*}
		for $c_6>0$ large enough, where the last inequality follows from \eqref{eq:big-segment-regret} in the definition of the bad round $s(a)$ (\Cref{defn:bad-round}).
		Taking a further union bound over the choice of arm $a\in[K]$ gives us that $S(a,t_{\ell})>1$ for all choices of arm $a$ (define this as the good event $\mc{E}_2(t_{\ell})$) with probability at least $1-K/T^3$. This means arm $a$ will be evicted before round $s(a)$ with high probability.

		Recall on the event $\mc{E}_1$ the concentration bounds of Proposition~\ref{prop:concentration} hold. Then, on $\mc{E}_1\cap \mc{E}_2(t_{\ell})$, letting $a=a_{\ell}$ in the preceding arguments we must have $t_{\ell+1}-1 \leq s(a_{\ell})$ 
		Thus, by the definition of the bad round $s(a_{\ell})$ (\Cref{defn:bad-round}), we must have:
		\begin{equation}\label{eq:not-too-many-bad}
			\ds\sum_{[s_{i,j}(a_{\ell}),s_{i,j+1}(a_{\ell})): s_{i,j+1}(a_{\ell}) < t_{\ell+1}-1}  \sqrt{s_{i,j+1}(a_{\ell})-s_{i,j}(a_{\ell})} \leq c_6\log(T) \sqrt{t_{\ell+1} - t_{\ell}}.
		\end{equation}
		Thus, by \eqref{eq:segment} in the definition of bad segments (\Cref{defn:bad-segment}), over the bad segments $[s_{i,j}(a_{\ell}),s_{i,j+1}(a_{\ell}))$ which elapse before the end of the episode $t_{\ell+1}-1$, the regret of $a_{\ell}$ to $a_t^\sharp$ is at most order $\log^2(T)\sqrt{K\cdot (t_{\ell+1}-t_{\ell})}$.

	Over each non-bad critical segment $[s_{i,j}(a_{\ell}),s_{i,j+1}(a_{\ell}))$, the regret of playing arm $a_{\ell}$ to $a_i^\sharp$ is at most $\log(T)\sqrt{\tau_{i+1}-\tau_i}$ since there is at most one non-bad critical segment per phase $[\tau_i,\tau_{i+1})$ (follows from \Cref{defn:bad-segment}).

		So, we conclude that on event $\mc{E}_1\cap \mc{E}_2(t_{\ell})$:
		\[
			\sum_{t=t_{\ell}}^{t_{\ell+1}-1} \delta_t(a_t^\sharp,a_{\ell}) \leq c_7\log^{2}(T)\sum_{i\in\textsc{Phases}(t_{\ell},t_{\ell+1})} \sqrt{K(\tau_{i+1} - \tau_i)}.
		\]
		Taking expectation, we have by conditioning first on $t_{\ell}$ and then on event $\mc{E}_1\cap \mc{E}_2(t_{\ell})$:
		\begin{align*}
			\mb{E}\left[ \sum_{t=t_{\ell}}^{t_{\ell+1}-1} \delta_t(a_t^\sharp,a_{\ell})\right] &\leq  \mb{E}_{t_{\ell}} \left[ \mb{E}\left[ \pmb{1}\{\mc{E}_1\cap \mc{E}_2(t_{\ell})\} \sum_{t=t_{\ell}}^{t_{\ell+1}-1} \delta_t(a_t^\sharp,a_{\ell}) \mid t_{\ell} \right] \right] \\
													   &\qquad + T\cdot \mb{E}_{t_{\ell}} \left[ \mb{E}\left[ \pmb{1}\{\mc{E}_1^c \cup \mc{E}_2^c(t_{\ell})\} \mid t_{\ell} \right] \right]\\
												      &\leq c_7\log^2(T) \mb{E}_{t_{\ell}} \left[ \mb{E}\left[ \pmb{1}\{\mc{E}_1\cap \mc{E}_2(t_{\ell})\} \sum_{i\in\textsc{Phases}(t_{\ell},t_{\ell+1})} \sqrt{K (\tau_{i+1} - \tau_i}) \mid t_{\ell} \right]\right] \\
												      &\qquad + \frac{2K}{T^2}\\
												      &\leq c_7\log^2(T) \mb{E}\left[ \pmb{1}\{\mc{E}_1\}  \sum_{i\in\textsc{Phases}(t_{\ell},t_{\ell+1})} \sqrt{\tau_{i+1} - \tau_i}\right]  + \frac{2}{T},
		\end{align*}
		where in the last step we bound $\pmb{1}\{\mc{E}_1\cap \mc{E}_2(t_{\ell})\} \leq \pmb{1}\{\mc{E}_1\}$ and apply tower law again. This concludes the proof. $\hfill\blacksquare$ 

\end{document}